
\documentclass{article} % For LaTeX2e
\usepackage{iclr2022_conference,times}

% Optional math commands from https://github.com/goodfeli/dlbook_notation.
%\input{math_commands.tex}

%\usepackage{hyperref}
\usepackage[colorlinks=true,citecolor=blue,linkcolor=blue]{hyperref}
\usepackage{url}

\usepackage{amssymb}
\usepackage{amsmath,mathrsfs,dsfont}
\usepackage{nicefrac}
\usepackage{algorithm}
\usepackage{algorithmicx}
\usepackage{algpseudocode}

\usepackage{booktabs}

\usepackage{color}
\usepackage{enumitem}

\usepackage{array,cite}
\usepackage{graphicx,tikz}
\usepackage[mathscr]{euscript}
\usepackage{amsthm}
\usepackage{cite}

\usepackage{bm}
\usepackage{bbm}
\usepackage{color}

\usepackage{color}
\usepackage{epstopdf}
\usepackage{subcaption}
\usepackage{cleveref}
\usepackage{thmtools}
\usepackage{thm-restate}

\usepackage{bbding}
\usepackage{mathtools}

\captionsetup[subfigure]{subrefformat=simple,labelformat=simple}

\usepackage{multirow}
\usepackage{slashbox}
\newcommand{\cfrakR}{\mathfrak{R}} %\usepackage{eufrak}

\newcommand\blfootnote[1]{%
  \begingroup
  \renewcommand\thefootnote{}\footnote{#1}%
  \addtocounter{footnote}{-1}%
  \endgroup
}

\graphicspath{{illustrations/}}

\title{Discriminative Similarity for Data Clustering}

% Authors must not appear in the submitted version. They should be hidden
% as long as the \iclrfinalcopy macro remains commented out below.
% Non-anonymous submissions will be rejected without review.

\author{Yingzhen Yang \\
School of Computing and Augmented Intelligence\\
Arizona State University\\
Tempe, AZ 85281, USA \\
\texttt{yingzhen.yang@asu.edu}
\And
Ping Li \\
Cognitive Computing Lab \\
Baidu Research \\
Bellevue, WA 98004, USA \\
\texttt{liping11@baidu.com}
}

% The \author macro works with any number of authors. There are two commands
% used to separate the names and addresses of multiple authors: \And and \AND.
%
% Using \And between authors leaves it to \LaTeX{} to determine where to break
% the lines. Using \AND forces a linebreak at that point. So, if \LaTeX{}
% puts 3 of 4 authors names on the first line, and the last on the second
% line, try using \AND instead of \And before the third author name.

\iclrfinalcopy % Uncomment for camera-ready version, but NOT for submission.
\newcounter{optproblem}

\newtheoremstyle{mytheoremstyle} % name
    {\topsep}                    % Space above
    {\topsep}                    % Space below
    {\normalfont}                % Body font
    {}                           % Indent amount
    %{\scshape}                   % Theorem head font
    {\bfseries}                   % Theorem head font
    {.}                          % Punctuation after theorem head
    {.5em}                       % Space after theorem head
    {}  % Theorem head spec (can be left empty, meaning 'normal�)

\theoremstyle{mytheoremstyle}
\newtheorem{theorem}{Theorem}[section]
\newtheorem{remark}[theorem]{Remark}
\newtheorem{proposition}[theorem]{Proposition}

\newtheorem*{theorem*}{Theorem}
\newtheorem*{lemma*}{Lemma}
\newtheorem*{remark*}{Remark}
\newtheorem{lemma}[theorem]{Lemma}%[section]

%letter numbered

 % "letter-numbered" lemmas

 % "letter-numbered" lemmas

 % "letter-numbered" lemmas

 % "letter-numbered" lemmas

 % "letter-numbered" lemmas

\theoremstyle{mytheoremstyle}
\newtheorem{definition}{Definition}[section]

\theoremstyle{remark}
%\renewcommand{\qedsymbol}{\FourClowerSolid}

%===========================================================
\DeclareMathAlphabet{\pazocal}{OMS}{zplm}{m}{n}
\DeclareMathAlphabet{\mathpzc}{OMS}{pzc}{m}{it}

\setlist[itemize]{leftmargin=*}

%\setlength{ skip}{1pt}
%\setlength{ indent}{0pt}

%\def\[#1\]{\vspace{1mm}\begin{align}#1 \vspace{1mm}\end{align}}

% ========================
% Letter
% ========================
\renewcommand{\hat}{\widehat}

\newcommand{\bfm}[1]{\ensuremath{\mathbf{#1}}}
\newcommand{\bfsym}[1]{\ensuremath{\boldsymbol{#1}}}

   \def\bA{\bfm A}  
   \def\bB{\bfm B}  
     
   \def\bD{\bfm D}

   \def\bI{\bfm I}  
     
   \def\bK{\bfm K}  
   \def\bL{\bfm L}

     \def\RR{\mathbb{R}}
   \def\bS{\bfm S}  
\def\bt{\bfm t}

\def\bw{\bfm w}   \def\bW{\bfm W}  
\def\bx{\bfm x}     
   \def\bY{\bfm Y}  
     
\def\bzero{\bfm 0} \def\bone{\bfm 1}

 \def\cA{{\cal  A}}

 \def\cH{{\cal  H}}

 \def\cL{{\cal  L}}

 \def\cO{{\cal  O}}
 \def\cP{{\cal  P}}

 \def\cS{{\cal  S}}

 \def\cX{{\cal  X}}
 \def\cY{{\cal  Y}}

% =========================
% Greek Letters
% =========================
\def\balpha{\bfsym \alpha}

% ================
% General
% ================

\def\+#1{\mathcal{#1}}
\def\-#1{\textup{#1}}

\def\set#1{\left\{ #1 \right\}}
\def\pth#1{\left( #1 \right)}
\def\bth#1{\left[ #1 \right]}
\def\abth#1{\left | #1 \right |}

% =================
% Algebra
% =================

\newcommand{\La}{\left\langle\kern-0.64ex\left\langle}
\newcommand{\Ra}{\right\rangle\kern-0.64ex\right\rangle}

\def\Norm#1#2{{\left\vert\kern-0.4ex\left\vert\kern-0.4ex\left\vert #1
    \right\vert\kern-0.4ex\right\vert\kern-0.4ex\right\vert}_{#2}}
\def\norm#1#2{{\left\|#1\right\|}_{#2}}

\def\ltwonorm#1{\norm{#1}{2}}
%\def\fnorm#1{\Norm{#1}{\textup{F}}}

%indicator
\newcommand{\1}{{\rm 1}\kern-0.25em{\rm I}}
\def\indict#1{{\rm 1}\kern-0.25em{\rm I}_{\{#1\}}}

%ceil and floor

% =================
% Analysis
% ================

% =================
% Geometry
% =================

\def\set#1{\left\{#1\right\}}

% =================
% Optimization
% ================
\newcommand{\argmax}{\textup{argmax}}

% =================
% Probability
% =================

\def \E {\mathbb{E}}

\def \Pr {\textup{Pr}}

% =================
% Statistics
% =================

% =======================
% Signal Processing
% =======================

%\def \dh  {\textup{d}_{\textup{H}}}

%\numberwithin{equation}{section}

%equations
\newcommand{\beq}{\begin{equation}}
\newcommand{\eeq}{\end{equation}}
\newcommand{\beqa}{\begin{eqnarray}}
\newcommand{\eeqa}{\end{eqnarray}}
\newcommand{\beqas}{\begin{eqnarray*}}
\newcommand{\eeqas}{\end{eqnarray*}}
\def\bal#1\eal{\begin{align}#1\end{align}}
\def\bals#1\eals{\begin{align*}#1\end{align*}}
\def\bsal#1\esal{\begin{small}\begin{align}#1\end{align}\end{small}}
\def\bsals#1\esals{\begin{small}\begin{align*}#1\end{align*}\end{small}}
\def\bsfal#1\esfal{\begin{small}\begin{flalign}#1\end{flalign}\end{small}}

\begin{document}

\maketitle

\begin{abstract}
Similarity-based clustering methods separate data into clusters according to the pairwise similarity between the data, and the pairwise similarity is crucial for their performance. In this paper, we propose {\em Clustering by  Discriminative Similarity (CDS)}, a novel method which learns discriminative similarity for data clustering. CDS learns an unsupervised similarity-based classifier from each data partition, and searches for the optimal partition of the data by minimizing the generalization error of the learnt classifiers associated with the data partitions. By generalization analysis via Rademacher complexity, the generalization error bound for the unsupervised similarity-based classifier is expressed as the sum of discriminative similarity between the data from different classes. It is proved that the derived discriminative similarity can also be induced by the integrated squared error bound for kernel density classification. In order to evaluate the performance of the proposed discriminative similarity, we propose a new clustering method using a kernel as the similarity function, CDS via unsupervised kernel classification (CDSK), with its effectiveness demonstrated by experimental results. \blfootnote{Yingzhen Yang's work was conducted as a consulting researcher at Baidu Research - Bellevue, WA, USA. }
\end{abstract}
\vspace{-.1in}
\section{Introduction}
Similarity-based clustering methods segment the data based on the similarity measure between the data points, such as spectral clustering~\citep{Ng01}, pairwise clustering method~\citep{ShentalZHW03}, K-means~\citep{HartiganW79}, and kernel K-means~\citep{Scholkopf1998-kernal-component-analysis}. %and they circumvent the difficult problem of parameter estimation in
The success of similarity-based clustering highly depends on the underlying pairwise similarity over the data, which in most cases are constructed empirically, e.g., by Gaussian kernel or the K-Nearest-Neighbor (KNN) graph. In this paper, we model data clustering as a multiclass classification problem and seek for the data partition where the associated classifier, trained on cluster labels, can have low generalization error. Therefore, it is natural to formulate data clustering problem as a problem of training unsupervised classifiers: a classifier can be trained upon each candidate partition of the data, and the quality of the data partition can be evaluated by the performance of the trained classifier. Such classifier trained on a hypothetical labeling associated with a data partition is termed an unsupervised classifier.

We present {\em Clustering by Discriminative Similarity (CDS)}, wherein discriminative similarity is derived by the generalization error bound for an unsupervised similarity-based classifier. CDS is based on a novel framework of discriminative clustering by unsupervised classification wherein an unsupervised classifier is learnt from unlabeled data and the preferred hypothetical labeling should minimize the generalization error bound for the learnt classifier. When the popular Support Vector Machines (SVMs) is used in this framework, unsupervised SVM~\citep{XuNLS04} can be deduced. In this paper, a similarity-based classifier motivated by similarity learning~\citep{Balcan08,CortesMR13}, is used as the unsupervised classifier. By generalization analysis via Rademacher complexity, the generalization error bound for the unsupervised similarity-based classifier is expressed as sum of pairwise similarity between the data from different classes. Such pairwise similarity, parameterized by the weights of the unsupervised similarity-based classifier, serves as the discriminative similarity. The term ``discriminative similarity'' emphasizes the fact that the similarity is learnt so as to improve the discriminative capability of a certain classifier such as the aforementioned unsupervised similarity-based classifier.
\vspace{-.05in}
\subsection{Contributions and Main Results}
Firstly, we present Clustering by Discriminative Similarity (CDS) where discriminative similarity is induced by the generalization error bound for unsupervised similarity-based classifier on unlabeled data. The generalization bound for such similarity-based classifier is of independent interest, which is among the few results of generalization bounds for classification using general similarity functions (Section~\ref{sec::significance-bound} of Appendix). When the general similarity function is set to a Positive Semi-Definite (PSD) kernel, the derived discriminative similarity between two data points $\bx_i$,$\bx_j$ is $S_{ij}^{K} = 2(\alpha_i + \alpha_j - {\lambda}{\alpha_i}{\alpha_j}){K(\bx_i-\bx_j)}$, where $K$ can be an arbitrary PSD kernel and $\alpha_i$ is the kernel weight associated with $\bx_i$. With theoretical and empirical study, we argue that $S_{ij}^{K}$ should be used for data clustering instead of the conventional kernel similarity corresponding to uniform kernel weights. In the case of binary classification, we prove that the derived discriminative similarity $S_{ij}^{K} $ has the same form as the similarity induced by the integrated squared error bound for kernel density classification (Section~\ref{sec::similarity-kdc} of the appendix). Such connection suggests that there exists information-theoretic measure which is implicitly equivalent to our CDS framework for unsupervised learning, and our CDS framework is well grounded for learning similarity from unlabeled data.

Secondly, based on our CDS model, we develop a clustering algorithm termed Clustering by Discriminative Similarity via unsupervised Kernel classification (CDSK) in Section~\ref{sec::experiments}. CDSK uses a PSD kernel as the similarity function, and outperforms competing clustering algorithms, including nonparametric discriminative similarity based clustering methods and similarity graph based clustering methods, demonstrating the effectiveness of CDSK. When the kernel weights $\{\alpha_i\}$ are uniform, CDSK is equivalent to kernel K-Means~\citep{Scholkopf1998-kernal-component-analysis}. CDSK is more flexible by learning adaptive kernel weights associated with different data points.
\subsection{Connection to Related Works}
Our CDS model is related to a class of discriminative clustering methods which classify unlabeled data by various measures on discriminative unsupervised classifiers, and the measures include generalization error~\citep{XuNLS04} or the entropy of the posterior distribution of the label~\citep{GomesKP10}. Discriminative clustering methods~\citep{XuNLS04} predict the labels of unlabeled data by minimizing the generalization error bound for the unsupervised classifier with respect to the hypothetical labeling. Unsupervised SVM is proposed in~\citet{XuNLS04} which learns a binary classifier to partition unlabeled data with the maximum margin between different clusters. The theoretical properties of unsupervised SVM are further analyzed in~\citet{Karnin12}. Kernel logistic regression classifier is employed in~\citet{GomesKP10}, and it uses the entropy of the posterior distribution of the class label by the classifier to measure the quality of the hypothetical labeling. CDS model performs discriminative clustering based on a novel unsupervised classification framework by considering similarity-based or kernel classifiers which are important classification methods in the supervised learning literature. In contrasts with kernel similarity with uniform weights, the induced discriminative similarity with learnable weights enhances its capability to represent complex interconnection between data. The generalization analysis for CDS is primarily based on distribution free Rademacher complexity. While~\citet{YangLYWH-discriminative-similarity14} propose nonparametric discriminative similarity for clustering, the nonparametric similarity requires probability density estimation which is difficult for high-dimensional data, and the fixed nonparametric similarity is not adaptive to complicated data distribution.

The paper is organized as follows. We introduce the problem setup of Clustering by Discriminative Similarity in Section~\ref{sec:formulation-of-cds}. We then derive the generalization error bound for the unsupervised similarity-based classifier for CDS in Section~\ref{sec::generalization-bound-similarity} where the proposed discriminative similarity is induced by the error bound. The application of CDS to data clustering is shown in Section~\ref{sec::experiments}. Throughout this paper, the term kernel stands for the PSD kernel if no special notes are made.%Section~\ref{sec::similarity-kdc} shows that the discriminative similarity can also be induced by kernel density classification if the similarity function is a PSD kernel.

\section{Significance of CDSK over Existing Discriminative and Similarity-Based Clustering Methods} \label{sec::discussion-related-works}
Effective data similarity highly depends on the underlying probabilistic distribution and geometric structure of the data, and these two characteristics leads to ``data-driven'' similarity, such as~\citet{Zhu2014-RAG,Bicego2021-RatioRF,Ng01,ShentalZHW03,HartiganW79,Scholkopf1998-kernal-component-analysis} and similarity based on geometric structure of the data, such as the subspace structure (Sparse Subspace Clustering, or SSC in~\citet{ElhamifarV13}). Note that the sparse graph method, $\ell^1$-Graph~\citep{YanW09}, has the same formulation as SSC. Most existing clustering methods based on data-driven or geometric structure-driven similarity suffer from a common deficiency, that is, the similarity is not explicitly optimized for the purpose of separating underlying clusters. In particular, the Random Forest-based similarity~\citep{Zhu2014-RAG,Bicego2021-RatioRF} is extracted from features in decision trees. Previous works about subspace-based similarity~\citep{YanW09,ElhamifarV13} try to make sure that only data points lying on or close to the same subspace have nonzero similarity, so that data points from the same subspace can form a cluster. However, it is not guaranteed that features in the decision trees are discriminative enough to separate clusters, because the candidate data partition (or candidate cluster labels) do not participate in the feature or similarity extraction process. Note that synthetically generated negative class are suggested in~\citet{Zhu2014-RAG,Bicego2021-RatioRF} to train unsupervised random forest, however, the synthetic labels are not for the original data. Moreover, it is well known that the existing subspace learning methods only obtain reliable subspace-based similarity with restrictive geometric assumptions on the data and the underlying subspaces, such as large principal angle between intersecting subspaces~\citep{Soltanolkotabi2012,ElhamifarV13}.

Therefore, it is particularly important to derive similarity for clustering which meets two requirements: (1) discriminative measure with information such as cluster partition is used to derive such similarity so as to achieve compelling clustering performance; (2) it requires less restrictive assumptions on the geometric structure of the data than current geometric structure-based similarity learning methods, such as subspace clustering~\citep{YanW09,ElhamifarV13}.

\textbf{Significance. ~}  The proposed discriminative similarity of this paper meets these two requirements. First, the discriminative similarity is derived by the generalization error bound associated with candidate cluster labeling, and minimizing the objective function of our optimization problem for clustering renders a joint optimization of discriminative similarity and candidate cluster labeling in a way such that the similarity-based classifier has small generalization error bound. Second, our framework only assumes a mild classification model in Definition~\ref{def:classification-model}, which only requires an unknown joint distribution over data and its labels. In this way, the restrictive geometric assumptions are avoided in our method. Compared to the existing discriminative clustering methods, such as MMC~\citep{XuNLS04}, BMMC~\citep{ChenZZ14-BMMC}, RIM~\citep{GomesKP10}, and the other discriminative clustering methods such as~\citep{HuangLYLSW15-discriminative-extreme-machine,NguyenPLB17-discriminative-nonparametric}, the optimization problem of CDSK with discriminative similarity-based formulation is much easier to solve and it enjoys convexity and efficiency in each iteration of coordinate descent described in Algorithm~\ref{alg:CDSK}. In particular, as mentioned in Section~\ref{sec::additional-experiments} of the appendix, the first step (\ref{eq:obj-cdsk-1}) of each iteration can be solved by efficient SVD or other randomized large-scale SVD methods, and the second step (\ref{eq:obj-cdsk-2}) of each iteration can be solved by efficient SMO~\citep{Platt98-SMO}. Moreover, the optimization problems in these two steps are either convex or having closed-form solution. In contrast, MMC requires expensive semidefinite programming. RIM has to solve a nonconvex optimization problem and its formulation does not guarantee that the trained multi-class kernelized logistic regression has low classification error on candidate labeling, which explains why it has inferior performance compared to our method.  The discriminative Extreme Learning Machine~\citep{HuangLYLSW15-discriminative-extreme-machine} trains ELM using labels produced by a simple clustering method such as K-means, and the potentially poor cluster labels by the simple clustering method can easily result in unsatisfactory performance of this method.  The discriminative Bayesian nonparametric clustering~\citep{NguyenPLB17-discriminative-nonparametric} and BMMC~\citep{ChenZZ14-BMMC} require extra efforts of sampling hidden variables and tuning hyperparameters to generate the desirable number of clusters (or model selection), which could reduce the effect of discriminative measures used in these Bayesian nonparametric methods.

\vspace{-.1in}
\section{Problem Setup}\label{sec:formulation-of-cds}
We introduce the problem setup of the formulation of clustering by unsupervised classification. Given unlabeled data $\{\bx_l\}_{l=1}^n \subset \RR^d$, clustering is equivalent to searching for the hypothetical labeling which is optimal in some sense. Each hypothetical labeling corresponds to a candidate data partition. Figure~\ref{fig:hypothetical-labeling} illustrates four binary hypothetical labelings which correspond to four partitions of the data, and the data is divided into two clusters by each hypothetical labeling.

\begin{figure}[!hbt]
\centering \includegraphics[width=0.6\textwidth]{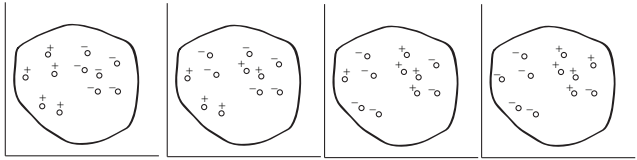}
\caption{Illustration of binary hypothetical labelings}
\label{fig:hypothetical-labeling}
\end{figure}

The discriminative clustering literature~\citep{XuNLS04,GomesKP10} has demonstrated the potential of multi-class classification for clustering problem.  Inspired by the natural connection between clustering and classification, we proposes the framework of Clustering by Unsupervised Classification which models clustering problem as a multi-class classification problem. A classifier is learnt from unlabeled data with a hypothetical labeling, which is associated with a candidate partition of the unlabeled data. The optimal hypothetical labeling is supposed to be the one such that its associated classifier has the minimum generalization error bound. To study the generalization bound for the classifier learnt from hypothetical labeling, the concept of classification model is needed. Given unlabeled data $\{\bx_l\}_{l=1}^n$, a classification model $M_{\cY}$ is constructed for any hypothetical labeling $\cY = \{y_l\}_{l=1}^n$ as follows.

\begin{definition}\label{def:classification-model}
The classification model corresponding to the hypothetical labeling $\cY = \{y_l\}_{l=1}^n$ is defined as $M_{\cY}=\left(\cS,F\right)$. $\cS=\{\bx_l, y_l\}_{l=1}^n$ are the labeled data by the hypothetical labeling $\cY$, and $\cS$ are assumed to be i.i.d. samples drawn from the some unknown joint distribution $P_{XY}$, where $(X,Y)$ is a random couple, $X
\in \cX \subseteq \RR^d$ represents the data in some compact domain $\cX$, and $Y \in \left\{ {1,2,...,c} \right\}$ is the class label of $X$, $c$ is the number of classes. $F$ is a classifier trained on $\cS$. The generalization error of the classification model $M_{\cY}$ is defined as the generalization error of the classifier $F$ in $M_{\cY}$.
\end{definition}

\textbf{The basic assumption of CDS is that the optimal hypothetical labeling minimizes the generalization error bound for the classification model.} With $f$ being different classifiers, different discriminative clustering models can be derived. When SVMs is used as the classifier $F$ in the above discriminative model, unsupervised SVM~\citep{XuNLS04} is obtained.

In~\citet{Balcan08}, the authors proposes a classification method using general similarity functions. The classification rule measures the similarity of the test data to each class, and then assigns the test data to the class such that the weighed average of the similarity between the test data and the training data belonging to that class is maximized over all the classes. Inspired by this classification method, we now consider using a general symmetric and continuous function $S \colon \cX \times \cX \to [0,1]$ as the similarity function in our CDS model. We propose the following hypothesis,
\bal\label{eq:hypothesis}
&h_S(\bx,y) = \sum\limits_{i \colon y_i = y} {\alpha_i}{ S(\bx, {\bx_i})}.
\eal%
In the next section, we derive generalization bound for the unsupervised similarity-based classifier based on the above hypothesis, and such generalization bound leads to discriminative similarities for data clustering. When $S$ is a PSD kernel, minimizing the generalization error bound amounts to minimization of a new form of kernel similarity between data from different clusters, which lays the foundation of a new clustering algorithm presented in Section~\ref{sec::experiments}.
\vspace{-0.02in}
\section{Generalization Bound for Similarity-based Classifier}\label{sec::generalization-bound-similarity}
\vspace{-0.02in}
In this section, the generalization error bound for the classification model in Definition~\ref{def:classification-model} with the unsupervised similarity-based classifier is derived as a sum of discriminative similarity between the data from different classes.

\subsection{Generalization Bound}
\label{sec::generalization-bound-main}
%Also, denote by $\Lambda_y$ the set of all the weights that $\balpha^{(y)}$ belongs, namely $\Lambda_y = \{\balpha \in \RR^n \colon \balpha \ge 0, \sum\limits_{i \colon y_i = y} {\alpha_i} \le 1, \alpha_i = 0 \,\,{\rm if} \,\, y_i \neq y \}$.
The following notations are introduced before our analysis. Let $\balpha = [\alpha_1,\ldots,\alpha_n]^{\top}$ be the nonzero weights that sum up to $1$, $\balpha^{(y)}$ be a $n \times 1$ column vector representing the weights belonging to class $y$ such that $\alpha_i^{(y)}$ is $\alpha_i$ if $y = y_i$, and $0$ otherwise. The margin of the labeled sample $(\bx,y)$ is defined as $m_{h_S}(\bx,y) = h_S(\bx,y) - \argmax_{y' \neq y} h_S(\bx,y')$, the sample $(\bx,y)$ is classified correctly if $m_{h_S}(\bx,y) \ge 0$.

The general similarity-based classifier $f_S$ predicts the label of the input $\bx$ by $f_S(\bx)=\argmax_{y \in \{1,\ldots,c\}} h_S(\bx,y)$.  We then begin to derive the generalization error bound for $f_S$ using the Rademacher complexity of the function class comprised of all the possible margin functions $m_{h_S}$. The Rademacher complexity~\citep{Bartlett2003,Koltchinskii01} of a function class is defined below:
\begin{definition}\label{def:RC}
Let $\{\sigma_i\}_{i=1}^n$ be $n$ i.i.d. random variables such that $\Pr[\sigma_i = 1] = \Pr[\sigma_i = -1] = \frac{1}{2}$. The Rademacher complexity of a function class $\cA$ is defined as
\bal\label{eq:RC}
&\cfrakR(\cA) = {\E}_{\{\sigma_i\},\{\bx_i\}}\left[\sup_{h \in \cA} {\abth{\frac{1}{n} \sum\limits_{i=1}^n {\sigma_i}{h(\bx_i)} } } \right].
\eal%
\end{definition}
In order to analyze the generalization property of the classification rule using the general similarity function, we first investigate the properties of general similarity function and its relationship to PSD kernels in terms of eigenvalues and eigenfunctions of the associated integral operator. The integral operator $({L_S}f)(\bx) = \int {S(\bx,\bt)f(\bt)} d \bt$ is well defined. It can be verified that $L_S$ is a compact operator since $S$ is continuous. According to the spectral theorem in operator theory, there exists an orthonormal basis $\{\phi_1,\phi_2,\ldots\}$ of $\cL^2$ which is comprised of the eigenfunctions of $L_S$, where $\cL^2$ is the space of measurable functions which are defined over $\cX$ and square Lebesgue integrable. $\phi_k$ is the eigenfunction of $L_S$ with eigenvalue $\lambda_k$ if ${L_S}{\phi_k} = {\lambda_k}{\phi_k}$. The following lemma shows that under certain assumption on the eigenvalues and eigenfunctions of $L_S$, a general symmetric and continuous similarity can be decomposed into two PSD kernels.
\begin{lemma}{\label{lemma::decompose-similarity}}
Suppose $S \colon \cX \times \cX \to [0,1]$ is a symmetric continuous function, and $\{\lambda_k\}$ and $\{\phi_k\}$ are the eigenvalues and eigenfunctions of $L_S$ respectively. Suppose $\sum\limits_{k \ge 1} {\lambda_k}|\phi_k(\bx)|^2 < C$ for some constant $C > 0$. Then $S(\bx,\bt) = \sum\limits_{k \ge 1} {\lambda_k}{\phi_k(\bx)}{\phi_k(\bt)}$ for any $\bx,\bt \in \cX$, and it can be decomposed as the difference between two positive semi-definite kernels: $S(\bx,\bt) = S^{+}(\bx,\bt) - S^{-}(\bx,\bt)$, with
\bal \label{eq:S-decompose}
&S^{+}(\bx,\bt) = \sum\limits_{k \colon \lambda_k \ge 0} \lambda_k \phi_k(\bx) \phi_k (\bt),
\quad S^{-}(\bx,\bt) = \sum\limits_{k:\lambda_k < 0} |\lambda_k| \phi_k(\bx) \phi_k (\bt). %\label{eq:S-decompose-negative}
\eal%
\end{lemma}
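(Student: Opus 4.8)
The plan is to adapt the classical proof of Mercer's theorem to the present setting, the only twist being that $S$ is merely symmetric rather than positive semi-definite, so that the self-adjoint operator $L_S$ may have eigenvalues of both signs. First I would record the analytic input. Since $\cX$ is compact and $S$ is continuous, $S$ is bounded and hence lies in $\cL^2(\cX\times\cX)$; therefore $L_S$ is a Hilbert--Schmidt, in particular compact, operator on the separable Hilbert space $\cL^2$, and it is self-adjoint because $S$ is symmetric. The spectral theorem then supplies the orthonormal basis $\{\phi_k\}$ of eigenfunctions with real eigenvalues $\{\lambda_k\}$ satisfying $\sum_k \lambda_k^2 < \infty$ (this sum equals the squared $\cL^2(\cX\times\cX)$-norm of $S$). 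Moreover, whenever $\lambda_k \neq 0$ we have $\phi_k = \lambda_k^{-1} L_S \phi_k$, and the right-hand side is continuous on $\cX$ by dominated convergence using continuity of $S$; hence every eigenfunction appearing with a nonzero eigenvalue in the series below is continuous.

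Second, I would establish the expansion in $\cL^2(\cX\times\cX)$. The products $\{\phi_i \otimes \phi_j\}_{i,j \ge 1}$ form an orthonormal basis of $\cL^2(\cX\times\cX)$, and using $L_S \phi_j = \lambda_j \phi_j$ together with symmetry of $S$ one computes the Fourier coefficients $\iprod{S}{\phi_i \otimes \phi_j} = \lambda_j \delta_{ij}$. Parseval's identity and completeness then give that $\sum_k \lambda_k\, \phi_k \otimes \phi_k$ converges to $S$ in $\cL^2(\cX\times\cX)$; in particular, some subsequence of partial sums converges to $S$ almost everywhere on $\cX\times\cX$.

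Third --- and this is the step I expect to be the real obstacle --- I would upgrade $\cL^2$ convergence to absolute, and in fact uniform, pointwise convergence. Let $I^{+} = \{k : \lambda_k \ge 0\}$ and $I^{-} = \{k : \lambda_k < 0\}$. By Cauchy--Schwarz, for any finite index set $J$,
\[
\abth{\sum_{k \in J} \lambda_k \phi_k(\bx)\phi_k(\bt)} \le \pth{\sum_{k \in J} |\lambda_k||\phi_k(\bx)|^2}^{1/2}\pth{\sum_{k \in J} |\lambda_k||\phi_k(\bt)|^2}^{1/2},
\]
so it suffices to control the diagonal sums $g^{+}(\bx) = \sum_{k \in I^{+}} \lambda_k |\phi_k(\bx)|^2$ and $g^{-}(\bx) = \sum_{k \in I^{-}} |\lambda_k||\phi_k(\bx)|^2$ uniformly in $\bx$. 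This is exactly where the summability hypothesis enters: it guarantees (after, if necessary, reading the hypothesis as $\sum_k |\lambda_k||\phi_k(\bx)|^2 < C$, which the later decomposition into $S^{\pm}$ in any case requires) that $g^{+}$ and $g^{-}$ are finite. Each is a pointwise-increasing limit of continuous partial sums; one checks --- and this continuity of $g^{\pm}$ is the technical heart of the argument, and what makes the non-PSD case genuinely harder than classical Mercer, where $g^{-}\equiv 0$ and $g^{+}$ is the restriction of $S$ to the diagonal --- that the limits are continuous on $\cX$, whence Dini's theorem upgrades the monotone convergence to uniform convergence and makes $\sup_{\bx}\sum_{k\in I^{\pm},\,k>N}|\lambda_k||\phi_k(\bx)|^2\to 0$. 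Feeding this into the Cauchy--Schwarz bound shows $\sum_k \lambda_k \phi_k(\bx)\phi_k(\bt)$ converges absolutely and uniformly on $\cX\times\cX$ to a continuous function $\widetilde{S}$; since $\widetilde{S} = S$ almost everywhere by the second step and both functions are continuous, $\widetilde{S} \equiv S$, which is the claimed representation.

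Finally, I would define $S^{+}$ and $S^{-}$ as in \eqref{eq:S-decompose}, i.e. the two one-sided subseries, each of which converges absolutely and uniformly by the bound above; then $S = S^{+} - S^{-}$ is immediate, and $S^{\pm}$ inherit symmetry and continuity from the $\phi_k$. For positive semi-definiteness, take any $\bz_1,\dots,\bz_m \in \cX$ and $c_1,\dots,c_m \in \RR$; absolute convergence lets me exchange the finite sum with the series, giving
\[
\sum_{p,q=1}^{m} c_p c_q\, S^{+}(\bz_p,\bz_q) = \sum_{k \in I^{+}} \lambda_k \pth{\sum_{p=1}^{m} c_p \phi_k(\bz_p)}^{2} \ge 0,
\]
and the analogous identity with $|\lambda_k|$ in place of $\lambda_k$ for $S^{-}$. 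Hence $S^{+}$ and $S^{-}$ are PSD kernels, which completes the proof.
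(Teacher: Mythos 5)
Your overall strategy --- spectral theorem for the compact self-adjoint $L_S$, Cauchy--Schwarz against the uniformly bounded diagonal sums, identification of the limit with $S$, and the pointwise verification that $S^{\pm}$ are PSD --- is the same as the paper's, and you are right (as the paper's own proof implicitly confirms, since it writes $|\lambda_k|$ in its Cauchy--Schwarz step) that the hypothesis must be read as $\sum_{k}|\lambda_k||\phi_k(\bx)|^2 < C$. But your third step has a genuine gap exactly where you write ``one checks'': the continuity of $g^{\pm}(\bx)=\sum_{k\in I^{\pm}}|\lambda_k||\phi_k(\bx)|^2$, which Dini's theorem requires as an input, is asserted and never established. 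In the classical Mercer argument this continuity comes for free because $g^{-}\equiv 0$ and $g^{+}$ is identified with the continuous function $S(\bx,\bx)$ on the diagonal; in the indefinite case the two one-sided diagonal sums are increasing limits of continuous functions whose \emph{difference} is continuous, but neither summand is individually guaranteed continuous by the stated hypotheses, and a uniform bound $C$ does not supply continuity. The gap is load-bearing as written, because your route to identifying the limit with $S$ goes through joint continuity of $\widetilde S$ on $\cX\times\cX$: you need it to upgrade the almost-everywhere equality coming from the two-variable Parseval expansion to equality everywhere.

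The repair is to drop joint uniform convergence entirely, which is what the paper does. Fix $\bx$. The nonnegative series $\sum_{k}|\lambda_k||\phi_k(\bx)|^2$ converges (it is bounded by $C$), so its tails vanish, and your Cauchy--Schwarz bound already gives uniform convergence in $\bt$ alone of $\sum_{k}\lambda_k\phi_k(\bx)\phi_k(\bt)$ to a limit $e_{\bx}(\bt)$ continuous in $\bt$. Then, instead of expanding $S$ in the tensor basis of $\cL^2(\cX\times\cX)$, expand the single slice $S(\bx,\cdot)\in\cL^2(\cX)$ in $\{\phi_k\}$: its coefficients are $\langle S(\bx,\cdot),\phi_k\rangle=(L_S\phi_k)(\bx)=\lambda_k\phi_k(\bx)$, and this holds for \emph{every} $\bx$, not merely almost every $\bx$ (the weakness of the Fubini route). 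Hence $e_{\bx}=S(\bx,\cdot)$ almost everywhere in $\bt$, both are continuous in $\bt$, so they agree for all $\bt$; running over all $\bx$ gives the representation everywhere. Everything downstream --- the definition of $S^{\pm}$ as the one-sided subseries and the PSD computation, which needs only pointwise convergence --- is then exactly as you wrote it.
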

We now use a regularization term to bound the Rademacher complexity for the classification rule using the general similarity function. Let $\Omega^{+}(\balpha) = {\sum\limits_{y=1}^c  { {\balpha^{(y)}}^{\top} {\bS^{+}} {\balpha^{(y)}} } }$ and $\Omega^{-}(\balpha) = {\sum\limits_{y=1}^c  { {\balpha^{(y)}}^{\top} {\bS^{-}} {\balpha^{(y)}} } }$ with $[\bS^{+}]_{ij} = S^{+}(\bx_i,\bx_j)$ and $[\bS^{-}]_{ij} = S^{-}(\bx_i,\bx_j)$. The space $\cH_y$ of all the hypothesis $h_{S}(\cdot,y)$ associated with label $y$ is defined as%\vspace{-0.1in}
\begin{align*}%\label{eq:cHy-S}
&\cH_{S,y} = \{(\bx,y) \to \sum\limits_{i \colon y_i = y} {\alpha_i}{S(\bx,{\bx_i})} \colon \balpha \ge \bzero, \bone^{\top}\balpha =1,
\Omega^{+}(\balpha) \le {B^{+}}^2,\Omega^{-}(\balpha) \le {B^{-}}^2\}
\end{align*}
for $1 \le y \le c$, with positive number $B^+$ and $B^-$ which bound $\Omega^{+}$ and $\Omega^{-}$ respectively. Let the hypothesis space comprising all possible margin functions be $\cH_S = \{(\bx,y) \to m_{h_S}(\bx,y) \colon h_S(\bx,y) \in \cH_{S,y} \}$. We then present the main result in this section about the generalization error of unsupervised similarity-based classifier $f_S$.

\begin{theorem}\label{theorem::similarity-error}
Given the discriminative model $M_{\cY}= (\cS, f_S)$, suppose $\Omega^{+}(\balpha) \le {B^{+}}^2$, $ \Omega^{-}(\balpha) \le {B^{-}}^2$, $\sup_{\bx \in \cX} |S^{+}(\bx,\bx)| \le R^2$, $\sup_{\bx \in \cX} |S^{-}(\bx,\bx)| \le R^2$ for positive constants $B^{+}$, $B^{-}$ and $R$. Then with probability $1-\delta$ over the labeled data $\cS$ with respect to any distribution in $P_{XY}$, under the assumptions of Lemma~\ref{lemma::decompose-similarity}, the generalization error of the general classifier $f_S$ satisfies
\bal\notag
R(f_S) =& \Pr\left[Y \neq f_S(X)\right]\\\label{eq:similarity-error-theorem}
\le& {\hat R_{n}(f_S)} + \frac{8R(2c-1){c}(B^{+} + B^{-})}{\gamma \sqrt n} + \left(\frac{16c(2c-1)(B^{+}+B^{-})R^2}{\gamma} + 1\right)\sqrt{\frac{\log{\frac{4}{\delta}}}{2n}},
\eal
where ${\hat R_{n}(f_S)} = \frac{1}{n}  \sum\limits_{i=1}^n \Phi \Big(\frac{h_S(\bx_i,y_i) - \argmax_{y' \neq y} h_S(\bx_i,y') }{\gamma} \Big)$ is the empirical loss of $f_S$ on the labeled data, $\gamma >0$ is a constant and $\Phi$ is defined as $\Phi(x) = \min\set{1,\max\{0,1-x\}}$.
Moreover, if $\gamma \ge 1$, the empirical loss ${\hat R_{n}(f_S)}$ satisfies %\vspace{-0.15in}
\bal\label{eq:similarity-error-similarity}
&{\hat R_{n}(f_S)} \le  1 - \frac{1}{n\gamma} \sum\limits_{i,j=1}^n {\frac{\alpha_i + \alpha_j}{2}} {S(\bx_i, {\bx_j})} + \frac{1}{n\gamma} \sum\limits_{1 \le i < j \le n}^n 2({\alpha_i + \alpha_j}){S(\bx_i,\bx_j)} {\1}_{y_i \neq y_j}.
\eal%
\end{theorem}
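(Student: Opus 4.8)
I would prove the two displayed inequalities separately. Inequality~\eqref{eq:similarity-error-theorem} is a margin-based generalization bound obtained from the Rademacher complexity of $\cH_S$, where the key ingredient is Lemma~\ref{lemma::decompose-similarity}, which lets me handle the non-PSD similarity $S$ through its two PSD components $S^{+},S^{-}$. Inequality~\eqref{eq:similarity-error-similarity} is an elementary bound on the surrogate loss $\Phi$ followed by a symmetrization of the weights $\{\alpha_i\}$.

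\textbf{Proof of \eqref{eq:similarity-error-theorem}.} Since $\balpha\ge\bzero$, $\bone^\top\balpha=1$ and $0\le S\le 1$, each margin function $m_{h_S}(\bx,y)$ is bounded, so the surrogate loss $(\bx,y)\mapsto\Phi(m_{h_S}(\bx,y)/\gamma)$ lies in $[0,1]$. I would then apply the standard margin-based multiclass generalization bound expressed through Rademacher complexity: with probability $1-\delta$, $R(f_S)\le\hat R_n(f_S)+\frac{2}{\gamma}\cfrakR(\cH_S)+(\text{range factor})\cdot\sqrt{\log(4/\delta)/(2n)}$, where the ``$+1$'' inside the last factor is the McDiarmid term for a $[0,1]$-valued loss and the additional $R^2$-dependent part of that factor comes from the second concentration step relating the data-free Rademacher complexity to its empirical version (whose bounded-difference constant is the $O(c(2c-1)R^2/\gamma)$ range of the decomposed margin loss). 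To control $\cfrakR(\cH_S)$ I would: (i) use that $\Phi$ is $1$-Lipschitz (contraction), absorbing the $1/\gamma$; (ii) write $m_{h_S}(\bx,y)$ as a difference of one coordinate and a maximum of $c-1$ coordinates of the vector $(h_S(\bx,1),\dots,h_S(\bx,c))$, so that $\cfrakR(\cH_S)$ is controlled by $\max_y\cfrakR(\cH_{S,y})$ up to the combinatorial factor $c(2c-1)$ (a factor $c$ from the union over the true label and a factor $2c-1$ from the at most $2c-1$ relevant coordinates); (iii) for each fixed $y$, invoke Lemma~\ref{lemma::decompose-similarity} to write $h_S(\bx,y)=\sum_{i:y_i=y}\alpha_i S^{+}(\bx,\bx_i)-\sum_{i:y_i=y}\alpha_i S^{-}(\bx,\bx_i)$ as a difference of a function in the RKHS of $S^{+}$ with squared norm ${\balpha^{(y)}}^\top\bS^{+}\balpha^{(y)}\le{B^{+}}^2$ and a function in the RKHS of $S^{-}$ with squared norm $\le{B^{-}}^2$; (iv) apply the standard kernel bound $\cfrakR(\{\bx\mapsto\langle w,\varphi(\bx)\rangle:\|w\|\le B\})\le B\sqrt{\sup_\bx \kappa(\bx,\bx)}/\sqrt n$ together with $\sup_\bx S^{\pm}(\bx,\bx)\le R^2$ to get $\cfrakR(\cH_{S,y})\le(B^{+}+B^{-})R/\sqrt n$. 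Combining (i)--(iv) and collecting constants yields the $\frac{8R(2c-1)c(B^{+}+B^{-})}{\gamma\sqrt n}$ term.

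\textbf{Proof of \eqref{eq:similarity-error-similarity}.} Fix $i$. Since $\balpha\ge\bzero$, $\bone^\top\balpha=1$ and $0\le S\le 1$, we have $0\le h_S(\bx_i,y_i)\le 1$ and $h_S(\bx_i,y')\ge0$ for all $y'$; hence $m_{h_S}(\bx_i,y_i)\le h_S(\bx_i,y_i)\le 1\le\gamma$, so $m_{h_S}(\bx_i,y_i)/\gamma\le 1$ and therefore $\Phi\big(m_{h_S}(\bx_i,y_i)/\gamma\big)\le 1-m_{h_S}(\bx_i,y_i)/\gamma$ by definition of $\Phi$. Moreover, since every $h_S(\bx_i,y')\ge0$, the maximum over $y'\ne y_i$ is at most the sum, so
\[
m_{h_S}(\bx_i,y_i)\ \ge\ \sum_{l:y_l=y_i}\alpha_l S(\bx_i,\bx_l)-\sum_{l:y_l\ne y_i}\alpha_l S(\bx_i,\bx_l)\ =\ \sum_{l=1}^n\alpha_l S(\bx_i,\bx_l)-2\sum_{l:y_l\ne y_i}\alpha_l S(\bx_i,\bx_l).
\]
Averaging $\Phi(\cdot)\le 1-m_{h_S}(\bx_i,y_i)/\gamma$ over $i$ and inserting this lower bound gives $\hat R_n(f_S)\le 1-\frac{1}{n\gamma}\sum_{i,l}\alpha_l S(\bx_i,\bx_l)+\frac{2}{n\gamma}\sum_{i,l:\,y_i\ne y_l}\alpha_l S(\bx_i,\bx_l)$. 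Finally I would symmetrize the weights using $S(\bx_i,\bx_l)=S(\bx_l,\bx_i)$: swapping $i\leftrightarrow l$ gives $\sum_{i,l}\alpha_l S(\bx_i,\bx_l)=\tfrac12\sum_{i,l}(\alpha_i+\alpha_l)S(\bx_i,\bx_l)=\sum_{i,j=1}^n\frac{\alpha_i+\alpha_j}{2}S(\bx_i,\bx_j)$, and likewise $\sum_{i,l:\,y_i\ne y_l}\alpha_l S(\bx_i,\bx_l)=\tfrac12\sum_{i,l:\,y_i\ne y_l}(\alpha_i+\alpha_l)S(\bx_i,\bx_l)=\sum_{1\le i<j\le n}(\alpha_i+\alpha_j)S(\bx_i,\bx_j)\1_{y_i\ne y_j}$ (the diagonal drops out since $y_i=y_i$). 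Substituting these two identities gives exactly \eqref{eq:similarity-error-similarity}.

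\textbf{Main obstacle.} The delicate part is \eqref{eq:similarity-error-theorem}: propagating the multiclass constants $c$ and $2c-1$ correctly through the contraction and union steps, and rigorously justifying that a merely bounded, not-necessarily-PSD similarity $S$ is governed by the two PSD Gram matrices $\bS^{+},\bS^{-}$ of Lemma~\ref{lemma::decompose-similarity} --- in particular that ${\balpha^{(y)}}^\top\bS^{\pm}\balpha^{(y)}\le{B^{\pm}}^2$ and that $\sup_\bx S^{\pm}(\bx,\bx)\le R^2$ legitimately play the roles of the RKHS-norm and kernel-diagonal bounds. Inequality~\eqref{eq:similarity-error-similarity}, by contrast, is purely arithmetic once one notices the two facts $\Phi(x)\le 1-x$ for $x\le 1$ and $\max\le\text{sum}$ for nonnegative terms.
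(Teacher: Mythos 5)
Your proposal is correct and follows essentially the same route as the paper: inequality (\ref{eq:similarity-error-theorem}) via a Koltchinskii--Panchenko-style margin bound plus a bound on $\cfrakR(\cH_S)$ obtained from the per-class classes $\cH_{S,y}$, the PSD decomposition $S=S^{+}-S^{-}$ of Lemma~\ref{lemma::decompose-similarity} with the RKHS-norm bounds ${B^{+}}^2,{B^{-}}^2$ and a McDiarmid step relating empirical to true Rademacher complexity (the paper's Lemmas~\ref{lemma::RC-bound} and~\ref{lemma::RC-bound-S}); and inequality (\ref{eq:similarity-error-similarity}) via $\Phi(x)\le 1-x$ for $x\le 1$, $\max\le\text{sum}$ for nonnegative hypotheses, and symmetrization of the weights, exactly as in the paper (your write-up in fact spells out the symmetrization identities that the paper leaves implicit). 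The only deviations are immaterial bookkeeping of intermediate constants (e.g.\ $2/\gamma$ versus the paper's $8/\gamma$ in the margin bound), which you correctly absorb into the final stated constants.
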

The indicator function ${\1}_{E}$ in (\ref{eq:similarity-error-similarity}) is $1$ if event $E$ is true, and $0$ otherwise.
\begin{remark}
\label{remark::kernel-remark}
Lemma~\ref{lemma::RC-bound-S} in the Appendix shows that the Rademacher complexity of $\cH_S$ is bounded in terms of $B^{+}$ and $B^{-}$, and that is why these two quantities appear on the RHS of (\ref{eq:similarity-error-theorem}). In addition, when $S$ is a Positive Semi-Definite (PSD) kernel $K$, it can be verified that $S^{-} \equiv 0$, $S = S^{+}$.
\end{remark}

\begin{remark}
When the decomposition $S = S^{+} - S^{-}$ exists and $S^{+}$, $S^{-}$ are PSD kernels, $S$ is the kernel of some Reproducing Kernel Kre{\u{\i}}n Space (RKKS)~\citep{Mary03-dissertation}.~\citet{Ong04-non-positive-kernels} and~\citet{Ong16-krein-space} analyzed the problem of learning SVM-style classifiers with indefinite kernels from the Kre{\u{\i}}n space. However, their work does not show when and how an indefinite and general similarity function can have PSD decomposition, as well as the generalization analysis for the similarity-based classifier using such general indefinite function as similarity measure. Our analysis deals with these problems in Lemma~\ref{lemma::decompose-similarity} and Theorem~\ref{theorem::similarity-error}. It should be emphasized that our generalization bound is of independent interest in supervised learning, because it is among the few results of generalization bounds using general similarity-based classifier. Section~\ref{sec::significance-bound}
shows that the our bound is a principled result with strong connection to established generalization error bound for Support Vector Machines (SVMs) or Kernel Machines.
\end{remark}

\subsection{Clustering by Discriminative Similarity}
\label{sec::CDS-model}
We let
\bal\label{eq:general-similarity}
&S_{ij}^{\rm sim} = 2(\alpha_i + \alpha_j ){S(\bx_i, \bx_j)}
- 2{\lambda}{\alpha_i}{\alpha_j}S^{+}(\bx_i, \bx_j) - 2{\lambda}{\alpha_i}{\alpha_j}S^{-}(\bx_i, \bx_j)
\eal%
be the discriminative similarity between data from different classes, which is induced by the generalization error bound (\ref{eq:similarity-error-theorem}) for the unsupervised general similarity-based classifier $f_S$. Minimizing the bound (\ref{eq:similarity-error-theorem}) motivates us to consider the optimization problem that minimizes ${\hat R_{n}(f_S)} + \lambda\big(\Omega^{+}(\balpha) + \Omega^{-}(\balpha)\big)$. Replacing ${\hat R_{n}(f_S)}$ by its upper bound in (\ref{eq:similarity-error-similarity}), we consider the following problem,
\bal\label{eq:obj-cds-similarity-discrete}
&\min_{\balpha,\cY}
\sum\limits_{1 \le i < j \le n}^n S_{ij}^{\rm sim}  {\1}_{y_i \neq y_j} - \sum\limits_{i,j=1}^n {\frac{\alpha_i + \alpha_j}{2}} {S(\bx_i,\bx_j)} + {\lambda} (\balpha^{\top}{\bS^{+}}\balpha + \balpha^{\top}{\bS^{-}}\balpha)
\,\,\, \nonumber \\
& s.t. \,\,\balpha \ge \bzero, \bone^{\top}\balpha = 1,\cY = \{y_i\}_{i=1}^n,
\eal%
where ${\lambda}>0$ is the weighting parameter for the regularization term $\Omega^{+}(\balpha) + \Omega^{-}(\balpha)$. Note that we do not set $\lambda$ to $\frac{16c(2c-1)R^2 + 8{\sqrt 2}R(2c-1){c} }{{\sqrt 2}\gamma}$ exactly matching the RHS of (\ref{eq:similarity-error-theorem}), because $\lambda$ controls the weight of the regularization term which bounds the unknown complexity of the function class $\cH_S$. Note that (\ref{eq:obj-cds-similarity-discrete}) encourages the discriminative similarity $S_{ij}^{\rm sim}$ between the data from different classes small. The optimization problem (\ref{eq:obj-cds-similarity-discrete}) forms the formulation of Clustering by Discriminative Similarity (CDS).

By Remark~\ref{remark::kernel-remark}, when $S$ is a PSD kernel $K$, $S^{-} \equiv 0$, $S = S^{+}$, $S_{ij}^{\rm sim}$ reduces to the following discriminative similarity for PSD kernels:
\bal\label{eq:kernel-similarity}
&S_{ij}^{K} = 2(\alpha_i + \alpha_j - {\lambda}{\alpha_i}{\alpha_j}){K(\bx_i-\bx_j)}, 1 \le i,j \le n,
\eal%
and $S_{ij}^{K}$ is the similarity induced by the unsupervised kernel classifier by the kernel $K$.

Without loss of generality, we set $K=K_{\tau}(\bx) = \exp(-\frac{\|\bx\|_2^2}{2{\tau^2}})$ which is the isotropic Gaussian kernel with kernel bandwidth $\tau > 0$, and we omit the constant that makes integral of $K$ unit.

When setting the general similarity function to kernel $K_{\tau}$, CDS aims to minimize the error bound for the corresponding unsupervised kernel classifier, which amounts to minimizing the following objective function%\vspace{-0.1in}
\bal\label{eq:obj-cds-discrete}
&\min_{\balpha \in \Lambda,\cY = \{y_i\}_{i=1}^n}
\sum\limits_{1 \le i < j \le n}^n S_{ij}^{K}  {\1}_{y_i \neq y_j} - \sum\limits_{i,j=1}^n {\frac{\alpha_i + \alpha_j}{2}} {K_{\tau}(\bx_i-\bx_j)}
+ {\lambda} \balpha^{\top} {\bK} \balpha,
\eal%
where $S_{ij}^{K}$ is defined in (\ref{eq:kernel-similarity}) with $K = K_{\tau}$. $\bK \in \RR^{n \times n}$ and $\bK_{ij} = K_{\tau}(\bx_i-\bx_j)$. $\lambda$ is tuned such that $S_{ij}^{K} \ge 0$, e.g., $\lambda \le 2$. In Section~\ref{sec::similarity-kdc}, it is shown that the discriminative similarity (\ref{eq:kernel-similarity}) can also be induced from the perspective of kernel density classification by kernel density estimators with nonuniform weights. It supports the theoretical justification for the induced discriminative similarity in this section.

\section{Application to Data Clustering}\label{sec::experiments}

In this section, we propose a novel data clustering method termed Clustering by Discriminative Similarity via unsupervised Kernel classification (CDSK) which is an empirical method inspired by our CDS model when the similarity function is a PSD kernel $K = K_{\tau}$. In accordance with the CDS model in Section~\ref{sec::CDS-model}, CDSK aims to minimize (\ref{eq:obj-cds-discrete}). However, problem (\ref{eq:obj-cds-discrete}) involves minimization with respect to discrete cluster labels $\cY = \{y_i\}$ which is NP-hard. In addition, it potentially results in a trivial solution which puts all the data in a single cluster due to the lack of constraints on the cluster balance. When $\bY$ is a binary matrix where each column is a membership vector for a particular cluster, $\sum\limits_{1 \le i < j \le n}^n S_{ij}^{K}  {\1}_{y_i \neq y_j} = \frac{1}{2}{\rm Tr} (\bY^{\top} \bL^{K} \bY)$. Therefore, (\ref{eq:obj-cds-discrete}) is relaxed in the proposed optimization problem for CDSK below:
\bal\label{eq:obj-cdsk}
&\min_{\balpha \in \Lambda, \bY \in \RR^{n \times c }}
\frac{1}{2}{\rm Tr} (\bY^{\top} \bL^{K} \bY) - \sum\limits_{i,j=1}^n {\frac{\alpha_i + \alpha_j}{2}} {K_{\tau}(\bx_i-\bx_j)} + {\lambda} \balpha^{\top} {\bK} \balpha
 \quad s.t. \,\, \bY^{\top} \bD^{K} \bY  = \bI_c,
\eal%
where $\Lambda = \{\balpha \colon \balpha \ge \bzero, \bone^{\top} \balpha = 1\}$, $\bS^{K}_{ij} = S_{ij}^{K}$, $\bL^{K} = \bD^{K} - \bS^{K}$ is the graph Laplacian computed with $\bS^{K}$, $\bD^{K}$ is a diagonal matrix with each diagonal element being the sum of the corresponding row of $\bS^{K}$: $[\bD^{K}]_{ii} = \sum\limits_{j=1}^n \bS_{ij}^{K}$, $\bI_c$ is a $c \times c$ identity matrix, $c$ is the number of clusters. The constraint in (\ref{eq:obj-cdsk}) is used to balance the cluster size. This is because minimizing (\ref{eq:obj-cds-discrete}) without any constraint on the cluster size results in a trivial solution where all data points form a single cluster. Inspired by spectral clustering~\citep{Ng01}, the constraint $\bY^{\top} \bD^{K} \bY  = \bI_c$ used in CDSK prevents imbalanced data clusters.

Problem (\ref{eq:obj-cdsk}) is optimized by coordinate descent. In each iteration of coordinate descent, optimization with respect to $\bY$ is performed with fixed $\balpha$, which is exactly the same problem as that of spectral clustering with a solution formed by the smallest $c$ eigenvectors of the normalized graph Laplacian $(\bD^{K})^{-1/2}{\bL^{K}}(\bD^{K})^{-1/2}$; then the optimization with respect to $\balpha$ is performed with fixed $\bY$, which is a standard constrained quadratic programming problem. The iteration of coordinate descent proceeds until convergence or the maximum iteration number $M$ is achieved. Each iteration solves two subproblems, (\ref{eq:obj-cdsk-1}) and (\ref{eq:obj-cdsk-2}). In order to promote sparsity of $\balpha$, $\balpha$ can be initialized by solving $\sum_{i=1}^n \ltwonorm{\bx_i - \sum_{j \neq i} \bx_j \alpha_j }^2 + \tau \norm{\balpha}{0}$ for a positive weighting parameter $\tau = 0.1$. The algorithm of CDSK is described in Algorithm~\ref{alg:CDSK}.

\begin{algorithm} [!hbt]
\small
\caption{Clustering by Discriminative Similarity via unsupervised Kernel classification (CDSK)}
\label{alg:CDSK}
\begin{algorithmic}
\State Input:
Unlabeled dataset $\set{\bx_l}_{l=1}^n$, parameter $\lambda$, maximum iteration number $M$.
%\State
%\begin{itemize}
%\FOR {$k=1, 2, \ldots$ }
\For{$t \gets 1$ to $M$}
\State With fixed $\balpha$, solve
\bal\label{eq:obj-cdsk-1}
&\min_{\bY \in \RR^{n \times c }}
{\rm Tr} (\bY^{\top} \bL^{K} \bY)
 \quad s.t. \,\, \bY^{\top} \bD^{K} \bY  = \bI_c,
\eal%

\State With fixed $\bY$, solve
\bal\label{eq:obj-cdsk-2}
&\min_{\balpha \in \Lambda, }
{\rm Tr} (\bY^{\top} \bL^{K} \bY) - \sum\limits_{i,j=1}^n {\frac{\alpha_i + \alpha_j}{2}} {K_{\tau}(\bx_i-\bx_j)} + {\lambda} \balpha^{\top} {\bK} \balpha \nonumber \\
&\quad s.t. \,\, \bY^{\top} \bD^{K} \bY  = \bI_c,
\eal%

\EndFor
\State Perform K-Means Clustering on rows of $\bY$ to obtain the clustering result.
\end{algorithmic}
\end{algorithm}

Furthermore, Section~\ref{sec::theory-coordinate-descent} in the appendix explains the theoretical properties of the coordinate descent algorithm for problem (\ref{eq:obj-cdsk}).

The baseline named SC-NS performs spectral clustering on the nonparametric similarity proposed in~\citet{YangLYWH-discriminative-similarity14}. The baseline named SC-MS first constructs a similarity matrix between data denoted by $\bW$, where $\bW_{ij} = K_{\tau}(\bx_i - \bx_j)$, then optimize the kernel bandwidth $h$ by minimizing $\sum\limits_i \|\bx_i - \frac{1}{d_i}\sum\limits_j \bW_{ij} \bx_j\|_2$ where $d_i = \sum\limits_j \bW_{ij}$. SC-MS then performs spectral clustering on $\bW$ with the kernel bandwidth $h$ obtained from the optimization.

To demonstrate the advantage of the proposed parametric discriminative similarity, we compare CDSK to various baseline clustering methods. SC stands for Spectral Clustering, which is the best performer among spectral clustering with similarity matrix set by Gaussian kernel (SCK), spectral clustering with similarity matrix set by a manifold-based similarity learning method (SC-MS)~\citep{KarasuyamaM13}, and spectral clustering with similarity matrix set by the nonparametric discriminative similarity (SC-NS) in~\citet{YangLYWH-discriminative-similarity14}. In SC-MS, Gaussian kernel is used as data similarity, and the parameters of the diagonal covariance matrix is optimized so as to minimize the data reconstruction error term. SC-NS minimizes nonparametric kernel similarity between data across different clusters, which is the same objective as that of kernel K-Means~\citep{Scholkopf1998-kernal-component-analysis}, so its performance is the same as kernel K-Means. Please refer to Section~\ref{sec::discussion-related-works} for discussion about other baselines.

\textbf{Datasets.~} We conduct experiments on the Yale face dataset, UCI Ionosphere dataset, the MNIST handwritten digits dataset and the Georgia Face dataset. The Yale face dataset has face images of $15$ people with $11$ images for each person. The Ionosphere data contains $351$ points of dimensionality $34$. The Georgia Face dataset contains images of $50$ people, and each person is represented by $15$ color images with cluttered background. COIL-20 dataset has $1440$ images of size $32 \times 32$ for $20$ objects with background removed in all images. The COIL-100 dataset contains $100$ objects with $72$ images of size $32 \times 32$ for each object. CMU PIE face data contains $11554$ cropped face images of size $32 \times 32$ for $68$ persons, and there are around $170$ facial images for each person under different illumination and expressions. The UMIST face dataset is comprised of $575$ images of size $112 \times 92$ for $20$ people. CMU Multi-PIE (MPIE) data~\citep{GrossMultiPIE} contains $8916$ facial images captured in four sessions. The MNIST handwritten digits database has a total number of $70000$ samples of dimensionality $1024$ for digits from $0$ to $9$. The digits are normalized and centered in a fixed-size image. The Extended Yale Face Database B (Yale-B) dataset contains face images for $38$ subjects with $64$ frontal face images taken under different illuminations for each subject. CIFAR-10 dataset consists of 50000 training images and $10000$ testing images in $10$ classes, and each image is a color one of size $32 \times 32$, and we perform data clustering using all the training and testing images. We also use the miniImageNet dataset used in~\citet{VinyalsBLKW16-oneshot-learning} to evaluate the potential of clustering methods. MiniImageNet consists of $60, 000$ color images of size $84 \times 84$ with $100$ classes, and each class has $600$ images. MiniImageNet is known to be more complex than the CIFAR-10 dataset, and we perform clustering on the $64$ classes in miniImageNet which are used for few-shot learning, so $38,400$ images are used for clustering. For every clustering method involving randomness such as K-Means, we report the average performance of running it for $10$ times.

\textbf{Performance Measures and Tuning $\lambda$ by Cross Validation.~} We use Accuracy (AC) and Normalized Mutual Information (NMI)~\citep{Zheng04} as the performance measures. The results of different clustering methods are shown in Table~\ref{table:results-part1} and Table~\ref{table:results-part2} in the format of AC(NMI). Except for SC-MS, the kernel bandwidth in all methods is set as the variance of the pairwise Euclidean distance between the data. $\lambda$ is the weight for the regularization term in our derived generalization bound. As explained in Section~\ref{sec::significance-bound} of the appendix, $\lambda$ plays the same role as the weight in the regularization term of SVMs or Kernel Machines. Following the common practice in the literature of SVM or Kernel Machines, $\lambda$ can be tuned by Cross-Validation (CV). While this is an unsupervised learning task and these is no labeled data for CV, we still developed a well-motivated CV procedure. Following the practice in~\citet{Mairal2012-task-driven-DL}, we randomly sampled $10\%$ of the given data as the validation data, then perform CDSK on the validation data. The best $\lambda$ is chosen among the discrete values between $[0.05,05]$ with a step of $0.05$ which minimizes the average entropy of the obtained embedding matrix $\bY \in \mathbb R^{n \times c}$ by Algorithm~\ref{alg:CDSK},  where the average entropy is compute as $\frac{1}{n} \sum\limits_{i=1}^n {\rm entropy} ({\rm softmax} ({\bY}^i))$. This is because we would like to choose $\lambda$ which renders the most confident clustering embedding. We perform CDSK on all the datasets using this tuning strategy and observe improved performance as shown in the above two tables. For clustering methods involving random operations, the average performance over $10$ runs is reported.

\textbf{Computational Complexity.~} Suppose the optimization of CDSK comprises $M$ iterations of coordinate descent. The first subproblem (\ref{eq:obj-cdsk-1}) in Algorithm~\ref{alg:CDSK} takes $\cO(n^2c)$ steps using truncated Singular Value Decomposition (SVD) by Krylov subspace iterative method. We adopt Sequential Minimal Optimization (SMO)~\citep{Platt98-SMO} to solve the second subproblem (\ref{eq:obj-cdsk-2}), which takes roughly $\cO(n^{2.1})$ steps as reported in~\citet{Platt98-SMO}. Therefore, the overall time complexity of CDSK is $\cO(M c n^2 + M n^{2.1})$. $M$ is set to $20$ throughout all the experiments.
\vspace{-.1in}
\begin{table}[!hbt]
\centering
\caption{ Clustering results on Yale-B, Ionosphere, Georgia Face, COIL-20, COIL-100, CMU PIE and UMIST Face.}
\resizebox{1\linewidth}{!}{
\begin{tabular}{|l|*{7}{c|}}\hline
\backslashbox{Methods}{Dataset}
                                       &Yale-B       &Ionosphere      &Georgia Face          &COIL-20      &COIL-100    &CMU PIE             &UMIST Face   \\\hline
K-Means                                     &0.09(0.13)   &0.71(0.13)      &0.50(0.69)            &0.65(0.76)   &0.49(0.75)  &0.08(0.19)          &0.42(0.64)   \\\hline
SC                                     &0.11(0.15)   &0.74(0.22)      &0.52(0.70)            &0.43(0.62)   &0.28(0.59)  &0.07(0.18)          &0.42(0.61)   \\\hline
$\ell^{1}$-Graph~\citep{ElhamifarV13} &0.79(0.78)   &0.51(0.12)      &0.54(0.70)            &0.79(0.91)   &0.53(0.80)  &0.23(0.34)          &0.44(0.65)   \\\hline
SMCE~\citep{ElhamifarV11}                                   &0.34(0.39)   &0.68(0.09)      &\textbf{0.60(0.74)}   &0.88(0.88)   &0.56(0.81)  &0.16(0.34)          &0.45(0.66)   \\\hline
Lap-$\ell^{1}$-Graph~\citep{YangWYWCH14-Laplacian-l1graph}                   &0.79(0.78)   &0.50(0.09)      &0.58(0.73)            &0.79(0.91)   &0.56(0.81)  &0.30(\textbf{0.51}) &0.50(0.69)   \\\hline
RAG~\citep{Zhu2014-RAG}               &0.13(0.19)   &0.70(0.11)      &0.17(0.38)            &0.50(0.64)   &0.58(0.81)  &0.14(0.34)          &0.26(0.28)   \\\hline
MMC~\citep{XuNLS04}                                    &0.71(0.69)   &0.75(0.21)      &0.42(0.58)           &0.80(0.89)   &0.61(0.63)  &0.22(0.30)          &0.51(0.56)   \\\hline
BMMC~\citep{ChenZZ14-BMMC}            &0.65(0.63)   &0.70(0.15)      &0.34(0.41)            &0.82(0.93)   &0.64(0.69)  &0.18(0.23)          &0.55(0.61)   \\\hline
RIM~\citep{GomesKP10}                                   &0.62(0.74)   &0.59(0.08)      &0.39(0.56)            &0.77(0.82)   &0.71(0.79)  &0.26(0.34)          &0.40(0.53)   \\\hline
RatioRF~\citep{Bicego2021-RatioRF}    &0.39(0.53)   &0.62(0.05)      &0.18(0.40)            &0.65(0.75)   &0.36(0.64)  &0.15(0.36)          &0.29(0.34)   \\\hline
CDSK (\textbf{Ours})                                   &\textbf{0.83(0.86)}   &\textbf{0.76(0.25)}  &\textbf{0.60(0.74)}   &\textbf{0.93(0.97)}   &\textbf{0.78(0.92)}   &\textbf{0.32}(0.50)  &\textbf{0.67(0.80)} \\\hline
\end{tabular}
}
\label{table:results-part1}
\end{table}
\vspace{-.1in}
\begin{table}[!hbt]
\centering
\caption{ Clustering results on CMU Multi-PIE which contains the facial images captured in four sessions (S$1$ to S$4$), MNIST, CIFAR-10, and Mini-ImageNet}
\resizebox{1\linewidth}{!}{
\begin{tabular}{|l|*{7}{c|}}\hline
\backslashbox{Methods}{Dataset}
                                       &MPIE S1       &MPIE S2        &MPIE S3             &MPIE S4      &MNIST          &CIFAR-10         &Mini-ImageNet   \\\hline
KM                                     &0.12(0.50)    &0.13(0.48)     &0.13(0.48)          &0.13(0.49)   &0.52(0.47)     &0.19(0.06)       &0.27(0.33)      \\\hline
SC                                     &0.13(0.53)    &0.14(0.51)     &0.14(0.52)          &0.15(0.53)   &0.38(0.36)     &0.21(0.04)       &0.29(0.35)      \\\hline
$\ell^{1}$-Graph~\citep{ElhamifarV13}  &0.59(0.77)    &0.70(0.81)     &0.63(0.79)          &0.68(0.81)   &0.57(0.61)     &0.28(0.24)       &0.28(0.37)     \\\hline
SMCE~\citep{ElhamifarV11}              &0.17(0.55)    &0.19(0.53)     &0.19(0.52)          &0.18(0.53)   &0.65(0.67)     &0.31(0.30)       &0.29(0.37)     \\\hline
Lap-$\ell^{1}$-Graph~\citep{YangWYWCH14-Laplacian-l1graph}   &0.59(0.77) &0.70(0.81)   &0.63(0.79)    &0.68(0.81)    &0.56(0.60)     &0.29(0.30)       &0.29(0.37)      \\\hline
RAG~\citep{Zhu2014-RAG}                &0.34(0.75)    &0.30(0.69)     &0.31(0.68)          &0.29(0.70)    &0.59(0.51)     &0.22(0.10)      &0.18(0.33)     \\\hline
MMC~\citep{XuNLS04}                    &0.49(0.58)    &0.51(0.60)     &0.53(0.65)          &0.50(0.61)    &0.64(0.60)     &0.31(0.28)      &0.19(0.34)    \\\hline
BMMC~\citep{ChenZZ14-BMMC}             &0.40(0.51)    &0.44(0.59)     &0.45(0.61)          &0.49(0.66)    &0.66(0.69)     &0.29(0.26)      &0.16(0.32)   \\\hline
RIM~\citep{GomesKP10}                  &0.50(0.63)    &0.52(0.68)     &0.55(0.71)          &0.51(0.67)    &0.54(0.62)    &0.20(0.25)       &0.17(0.38)    \\\hline
RatioRF~\citep{Bicego2021-RatioRF}     &0.54(\textbf{0.85}) &0.55(0.86)   &0.64(0.86)  &0.62(0.86)     &0.48(0.39)   &0.20(0.09)   &0.26(0.38)   \\\hline
CDSK (\textbf{Ours})                   &\textbf{0.66(0.85)}   &\textbf{0.72(0.88)}  &\textbf{0.68(0.87)}   &\textbf{0.73(0.89)}   &\textbf{0.76(0.75)}   &\textbf{0.46(0.39)}  &\textbf{0.31(0.41)} \\\hline
\end{tabular}
}
\label{table:results-part2}
\end{table}

%In the second part of the experiments, we compare CDSK to other competing similarity graph based clustering methods with the same parameter setting as that for the first part, including  $\ell^{1}$-Graph~\citep{YanW09}, Laplacian regularized $\ell^{1}$-Graph (Lap-$\ell^{1}$-Graph)~\citep{YangWYWCH14-Laplacian-l1graph} and Sparse Manifold Clustering and Embedding (SMCE)~\citep{ElhamifarV11}. CDSK is also compared to Kernel K-means and spectral clustering (SC)~\citep{Ng01}.  The clustering results on these three datasets are shown in Table~\ref{table:cmupie-multipie-umist}. It can be observed that CDSK delivers better performance than the competing similarity graph based clustering methods, thanks to the adaptive weights in its kernel similarity. Moreover, in order to investigate the parameter sensitivity of CDSK with respect to $\lambda$, we vary $\lambda$ and illustrate the results of CDSK on the UMIST face dataset in Figure~\ref{fig:umist-lambda}. The performance of CDSK is noticeably better than other competing algorithms over a relatively large range of $\lambda$.

%\begin{figure*}[t]
%\begin{center}
%\includegraphics[width=3.2in]{ac_lambda_umist.pdf}
%\includegraphics[width=2.8in]{nmi_lambda_umist.pdf}
%%\vspace{-0.1in}
%\caption{Clustering performance of CDSK with different values of $\lambda$ on the UMIST Face Data. Left: Accuracy; Right: NMI}
%\label{fig:umist-lambda}
%\end{center}
%\end{figure*}

%\newpage\clearpage
\vspace{-.15in}

\vspace{-.1in}
\section{Conclusion}\label{sec::conclusion}
\vspace{-.13in}
We propose a new clustering framework termed Clustering by Discriminative Similarity (CDS), which searches for the optimal partition of data where the associated unsupervised classifier has minimum generalization error bound. Under this framework, discriminative similarity is induced by the generalization error bound for unsupervised similarity-based classifier, and CDS minimizes discriminative similarity between different clusters. It is also proved that the discriminative similarity can be induced from kernel density classification. Based on CDS, we propose a new clustering method named CDSK (CDS via unsupervised kernel classification), and demonstrate its effectiveness in data clustering.

%\appendix
%\renewcommand{\thesection}{\Alph{section}}

\bibliography{ref}
\bibliographystyle{iclr2022_conference}

\appendix
%\renewcommand{\thesection}{\Alph{section}}
%\section{Appendix}
%You may include other additional sections here.
%
%

\section{Connection to Kernel Density Classification}\label{sec::similarity-kdc}
In this section, we show that the discriminative similarity (\ref{eq:kernel-similarity}) can also be induced from kernel density classification with varying weights on the data, and binary classification is considered in this section. For any classification model $M_{\cY}= (\cS, f)$ with hypothetical labeling $\cY$ and the labeled data $\cS = \{\bx_i,y_i\}_{i=1}^n$, suppose the joint distribution $P_{XY}$ over $\cX \times \{1,2\}$ has probabilistic density function $p(\bx,y)$. Let $P_X$ be the induced marginal distribution over the data with probabilistic density function $p(\bx)$. Robust kernel density estimation methods~\citep{GirolamiH03,KimS08,MahapatruniG11,KimS12} suggest the following kernel density estimator where the kernel contributions of different data points are reflected by different nonnegative weights that sum up to $1$:\vspace{-0.15in}

\bal\label{eq:nonuniform-kde}
&\hat p(\bx) ={\tau_0} \sum\limits_{i=1}^n {\alpha_i}{K_{\tau}(\bx - {\bx_i})}, \bone^{\top} \balpha = 1, \balpha \ge 0,
\eal%
where $\tau_0 = \frac{1}{({2{\pi}})^{d/2}{h^d}}$. Based on (\ref{eq:nonuniform-kde}), it is straightforward to obtain the following kernel density estimator of the density function $p(\bx,y)$:
\bal\label{eq:nonuniform-kde-xy}
&\hat p(\bx,y) ={\tau_0} \sum\limits_{i:y_i = y} {\alpha_i}{K_{\tau}(\bx - {\bx_i})}.
\eal%
Kernel density classifier is learnt from the labeled data $\cS$ and constructed by kernel density estimators (\ref{eq:nonuniform-kde-xy}). Kernel density classifier resembles the Bayes classifier, and it classifies the test data $\bx$ based on the conditional label distribution $P(Y|X=\bx)$, or equivalently, $\bx$ is assigned to class $1$ if $\hat p(\bx,1)- \hat p(\bx,2) \ge 0$, otherwise it is assigned to class $2$. Intuitively, it is preferred that the decision function $\hat r(\bx,\balpha) = \hat p(\bx,1)- \hat p(\bx,2)$ is close to the true Bayes decision function $r = p(\bx,1)- p(\bx,2)$.~\citet{GirolamiH03,KimS08} propose to use Integrated Squared Error (ISE) as the metric to measure the distance between the kernel density estimators and their true counterparts, and the oracle inequality is obtained that relates the performance of the $L_2$ classifier in~\citet{KimS08} to the best possible performance of kernel density classifier in the same category. ISE is adopted in our analysis of kernel density classification, and the ISE between the decision function $\hat r$ and the true Bayes decision function $r$ is defined as
\bal\label{eq:ise-def}
&{\rm ISE} (\hat r,r) = \|\hat r - r\|_{\cL_2}^2 = \int_{\RR^d} {(\hat r - r)^2} dx.
\eal%
The upper bound for the ISE ${\rm ISE} (r,\hat r)$ also induces discriminative similarity between the data from different classes, which is presented in the following theorem.

\begin{theorem}\label{theorem::ise}
Let $n_1 = \sum\limits_{i=1} \1_{y_i=1}$ and $n_2 = \sum\limits_{i=1} \1_{y_i=2}$. With probability at least $1-2{n_2}\exp\big(-2(n-1)\varepsilon^2\big)-2{n}\exp\big(-2n\varepsilon^2\big)$ over the labeled data $\cS$, the ISE between the decision function $\hat r(\bx,\balpha)$ and the true Bayes decision function~$r(\bx)$~satisfies%\vspace{-0.1in}
\bal\label{eq:ise-theorem}
& {\rm ISE} (\hat r,r) \le \frac{\tau_0}{n} {\hat {\rm ISE}} (\hat r,r) + {\tau_1} K(\balpha) + 2{\tau_0}\Big(\frac{1}{n-1}+\varepsilon\Big),
\eal%
where
\bal\label{eq:hat-ise}
&{\hat {\rm ISE}}(\hat r,r) = 4 {\sum\limits_{1 \le i < j \le n}(\alpha_i+\alpha_j) K_{\tau} (\bx_i - \bx_j) {\1}_{y_i \neq y_j} } - {\sum\limits_{i,j = 1}^ n (\alpha_i+\alpha_j) K_{\tau} (\bx_i - \bx_j) },
\eal%
\bal\label{eq:reg-ise}
&K(\balpha) = {\balpha}^{\top} (\bK_{{\sqrt 2}h}){\balpha}  -
4\sum\limits_{1 \le i < j \le n} {\alpha_i}{\alpha_j} K_{{\sqrt 2}h} (\bx_i - \bx_j) {\1}_{y_i \neq y_j},
\eal%
and $\bK_{{\sqrt 2}h}$ is the gram matrix evaluated on the data $\{\bx_i\}_{i=1}^n$ with the kernel $K_{{\sqrt 2}h}$.
\end{theorem}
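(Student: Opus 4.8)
The plan is to expand the integrated squared error directly and to control the two non‑constant pieces that result. Write $\text{ISE}(\hat r,r)=\|\hat r\|_{\cL_2}^2-2\langle\hat r,r\rangle_{\cL_2}+\|r\|_{\cL_2}^2$. The last term $\|r\|_{\cL_2}^2=\int_{\RR^d}r^2$ depends on neither the weights $\balpha$ nor the labeling $\cY$, so it is a fixed (unknown, harmless) constant; following the robust kernel density estimation literature cited above, one controls the $\balpha$‑dependent part of the ISE --- equivalently, (\ref{eq:ise-theorem}) is to be read with $\|r\|_{\cL_2}^2$ added to its right‑hand side. The real content is then (i) an exact closed form for $\|\hat r\|_{\cL_2}^2$, and (ii) a data‑based control of the cross term $-2\langle\hat r,r\rangle_{\cL_2}$.

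For (i): introduce the sign encoding $z_i=+1$ when $y_i=1$ and $z_i=-1$ when $y_i=2$, so $\hat r(\bx)=\tau_0\sum_{i=1}^n z_i\alpha_i K_\tau(\bx-\bx_i)$ and $\|\hat r\|_{\cL_2}^2=\tau_0^2\sum_{i,j}z_iz_j\alpha_i\alpha_j\int K_\tau(\bx-\bx_i)K_\tau(\bx-\bx_j)\,d\bx$. Completing the square gives the Gaussian product--integral identity $\int K_\tau(\bx-\ba)K_\tau(\bx-\bb)\,d\bx=(\pi h^2)^{d/2}K_{\sqrt2 h}(\ba-\bb)$ (with $\tau=h$ the bandwidth), so $\|\hat r\|_{\cL_2}^2=\tau_1\sum_{i,j}z_iz_j\alpha_i\alpha_j K_{\sqrt2 h}(\bx_i-\bx_j)$ with $\tau_1=\tau_0^2(\pi h^2)^{d/2}$. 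Substituting $z_iz_j=1-2\,\1_{y_i\neq y_j}$ and using the symmetry of $K_{\sqrt2 h}$ turns the double sum into $\balpha^{\top}\bK_{\sqrt2 h}\balpha-4\sum_{1\le i<j\le n}\alpha_i\alpha_j K_{\sqrt2 h}(\bx_i-\bx_j)\1_{y_i\neq y_j}$, which is precisely $K(\balpha)$ in (\ref{eq:reg-ise}); hence $\|\hat r\|_{\cL_2}^2=\tau_1 K(\balpha)$, the middle term of (\ref{eq:ise-theorem}).

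For (ii): write $\langle\hat r,r\rangle_{\cL_2}=\tau_0\sum_i z_i\alpha_i g_i$ with $g_i:=\int K_\tau(\bx-\bx_i)r(\bx)\,d\bx=\E_{(X,Y)}\big[K_\tau(X-\bx_i)(\1_{Y=1}-\1_{Y=2})\big]$, which, for $\bx_i$ held fixed, is an expectation over a fresh draw; splitting $r=p(\cdot,1)-p(\cdot,2)$, each component $\E[K_\tau(X-\bx_i)\1_{Y=y}]$ is estimated by the class‑$y$ empirical average over the remaining points (one must leave $\bx_i$ out when $y_i=y$, to avoid the biased diagonal term $K_\tau(0)=1$). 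The summands lie in $[0,1]$ and are independent of $\bx_i$, so Hoeffding's inequality for each estimate plus a union bound over the relevant $O(n)$ estimates --- those that must omit a point (effective sample size $n-1$) and those usable on the full sample (effective size $n$) --- gives, with the stated probability $1-2n_2 e^{-2(n-1)\varepsilon^2}-2n e^{-2n\varepsilon^2}$, that $|g_i-\hat g_i|\le\varepsilon$ for all $i$. On this event, using $\balpha\ge\bzero$ and $\bone^{\top}\balpha=1$ termwise, $-2\langle\hat r,r\rangle_{\cL_2}\le -2\tau_0\sum_i z_i\alpha_i\hat g_i+2\tau_0\varepsilon$; and a short computation --- splitting off the $j=i$ diagonal and using $K_\tau(0)=1$, $\sum_i\alpha_i=1$, $\sum_{i\neq j}\alpha_i=n-1$ --- rewrites $-2\tau_0\sum_i z_i\alpha_i\hat g_i$ as $\tfrac{\tau_0}{n}\hat{\text{ISE}}(\hat r,r)$ in the form (\ref{eq:hat-ise}) up to the additive slack $\tfrac{2\tau_0}{n-1}$. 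Adding back $\|\hat r\|_{\cL_2}^2=\tau_1 K(\balpha)$ yields (\ref{eq:ise-theorem}).

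I expect the cross term to be the main obstacle, for two reasons. First, $\hat r$ is built from the same sample used to estimate $\langle\hat r,r\rangle_{\cL_2}$; the leave‑one‑out device removes this dependence, and --- pleasantly --- the concentration events then involve only the fixed points $\{\bx_i\}$, not $\balpha$, so the bound holds uniformly over all feasible $\balpha$ with no uniform‑convergence / Rademacher argument needed (in contrast to Theorem~\ref{theorem::similarity-error}). Second, one must do the diagonal / leave‑one‑out bookkeeping carefully enough that the residual really fits inside $2\tau_0(\tfrac1{n-1}+\varepsilon)$, and apply Hoeffding to exactly the right class‑conditional sums (summands in $[0,1]$; effective sizes $n-1$ and $n$) to recover the precise failure probability. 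By comparison, step (i) and the rewriting of $\hat{\text{ISE}}$ and $K(\balpha)$ in terms of the $z_i$ are routine.
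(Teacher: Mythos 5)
Your proposal follows essentially the same route as the paper's proof: expand $\mathrm{ISE}(\hat r,r)$ into $\|\hat r\|_{\cL_2}^2 - 2\langle \hat r, r\rangle_{\cL_2} + \|r\|_{\cL_2}^2$, evaluate $\|\hat r\|_{\cL_2}^2 = \tau_1 K(\balpha)$ exactly via the Gaussian convolution identity, and control the cross term by leave-one-out class-conditional empirical averages with Hoeffding plus a union bound, with the diagonal $K_\tau(0)=1$ bookkeeping producing the $2\tau_0/(n-1)$ slack. Your explicit remark that the constant $\|r\|_{\cL_2}^2$ must be understood as added to the right-hand side is a fair and correct reading (the paper drops it silently in its final display), and the only blemish in your sketch is the stray claim $\sum_{i\neq j}\alpha_i = n-1$, which should be $1-\alpha_j$ but does not affect the argument.
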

Let $\lambda_1 > 0$ be a weighting parameter, then the cost function ${\hat {\rm ISE}} + \lambda_1 K(\balpha)$, designed according to the empirical term ${\rm ISE} (\hat r,r)$ and the regularization term $K(\balpha)$ in the ISE error bound (\ref{eq:ise-theorem}), can be expressed as
\begin{align*}
&{\hat {\rm ISE}} + {\lambda_1} K(\balpha) \le {\sum\limits_{1 \le i < j \le n} S_{ij}^{\rm ise}  {\1}_{y_i \neq y_j} } - {\sum\limits_{i,j = 1}^ n (\alpha_i+\alpha_j) K_{\tau} (\bx_i - \bx_j) } + {\lambda_1}{\balpha}^{\top} \bK_{{\sqrt 2}h}{\balpha},
\end{align*}
where the first term is comprised of sum of similarity between data from different classes with similarity $S_{ij}^{\rm ise} = 4(\alpha_i+\alpha_j - {\lambda_1}{\alpha_i}{\alpha_j} )K_{\tau} (\bx_i - \bx_j)$, and $S_{ij}^{\rm ise}$ is the discriminative similarity induced by the ISE bound for kernel density classification. Note that $S_{ij}^{\rm ise}$ has the same form as the discriminative similarity $S_{ij}^{K}$ (\ref{eq:kernel-similarity}) induced by our CDS model, up to a scaling constant and the choice of the balancing parameter $\lambda$. The proof of Theorem~\ref{theorem::ise} is deferred to Section~\ref{sec::proof-ISE}.

\section{More Explanation about the Theoretical Results in Section~\ref{sec::generalization-bound-main}}

\subsection{Theoretical Significance of the Bound (\ref{eq:similarity-error-theorem}) }\label{sec::significance-bound}
To the best of our knowledge, our generalization error bound (\ref{eq:similarity-error-theorem}) is the first principled result about generalization error bound for general similarity-based classifier with strong connection to the established generalization error bound for Support Vector Machines (SVMs) or Kernel Machines.

We now explain this claim. When the similarity function $S$ is a PSD kernel function, we have $S^{-} \equiv 0, S = S^+$ as explained in Remark~\ref{remark::kernel-remark}. As a reminder, $S$ is the general similarity function used in the similarity-based classification, and $S^+,S^-$ are PSD kernel functions, and it is proved that $S$ can be decomposed by $S = S^+-S^-$ under the mild conditions of Lemma 3.1. It follows that we can set $\Omega^-(\balpha) = 0$ and $B^-=0$. Plugging $B^-=0$ in the derived generalization error bound for the general similarity-based classification (\ref{eq:similarity-error-theorem}), we have
\bal\label{eq:similarity-error-theorem-seg1}
{\rm Prob}\left[Y \neq f_S(X)\right] \le {\hat R_{n}(f_S)} + \frac{8R(2c-1){c}B^{+}}{\gamma \sqrt n} + \left(\frac{16c(2c-1)B^{+}R^2}{\gamma} + 1\right)\sqrt{\frac{\log{\frac{4}{\delta}}}{2n}}.
\eal%
According to its definition, $\Omega^{+}(\balpha) = {\sum\limits_{y=1}^c  { {\balpha^{(y)}}^{\top} {\bS} {\balpha^{(y)}} } }$ because $\bS = \bS^+$. We define $B \coloneqq B^+$. Because $\Omega^{+}(\balpha) \le {B^{+}}^2$ as mentioned in Theorem 3.2, $B$ satisfies ${\sum\limits_{y=1}^c  { {\balpha^{(y)}}^{\top} {\bS} {\balpha^{(y)}} } } \le B^2$. As a result, when $S$ is a PSD kernel function, inequality (\ref{eq:similarity-error-theorem-seg1}) becomes
\bal\label{eq:similarity-error-theorem-seg2}
&{\rm Prob}\left[Y \neq f_S(X)\right] \le {\hat R_{n}(f_S)} + \frac{8R(2c-1){c}B}{\gamma \sqrt n} + \left(\frac{16c(2c-1)R^2 B}{\gamma} + 1\right)\sqrt{\frac{\log{\frac{4}{\delta}}}{2n}},
\eal%
with ${\sum\limits_{y=1}^c  { {\balpha^{(y)}}^{\top} {\bS} {\balpha^{(y)}} } } \le B^2$.

\vspace{0.1in}

Note that the bound (\ref{eq:similarity-error-theorem-seg2}) is in fact the generalization error bound for supervised learning when using $S$ as the similarity function in the similarity-based classification. At the end of this subsection, we provide a lemma proving that ${\sum\limits_{y=1}^c   {\balpha^{(y)}}^{\top} {\bS} {\balpha^{(y)}} }  \le B^2 \Rightarrow  {\balpha}^{\top} {\bS} {\balpha}   \le cB^2$, where $c$ is the number of classes.

Now we compare the generalization error bound (\ref{eq:similarity-error-theorem-seg2}) to the established generalization error bound for Kernel Machines in~\citet[Theorem 21]{Bartlett2003} for the case that $c=2$ with notations adapted to our analysis. The bound in~\citet[Theorem 21]{Bartlett2003} is for binary classification, which is presented as follows:
\bal\label{eq:kernel-error-ref}
&{\rm Prob}\left[Y \neq f_S(X)\right] \le {\hat R_{n}(f_S)} + \frac{4{\sqrt R}B}{\gamma \sqrt n} + \left( \frac{8}{\gamma} + 1 \right) \sqrt{ \frac{\log 4/\delta}{2n} }, {\balpha}^{\top} {\bS} {\balpha}   \le B^2.
\eal%
Comparing our generalization error bound (2) with $c=2$ to the max-margin generalization error bound (\ref{eq:kernel-error-ref}), it can be easily seen that the two bounds are equivalent up to a constant scaling factor. In fact, our bound (2) is more general which handles multi-class classification.

\vspace{0.1in}

\begin{lemma}\label{lemma::quadritic-bound-lemma}
When $S$ is a PSD kernel, then ${\sum\limits_{y=1}^c   {\balpha^{(y)}}^{\top} {\bS} {\balpha^{(y)}} }  \le B^2 \Rightarrow  {\balpha}^{\top} {\bS} {\balpha}   \le cB^2$.
\end{lemma}
\begin{proof}
Let $\mathcal H$ be the Reproducing Kernel Hilbert Space associated with the PSD kernel function $S$, and $\mathcal H$ is also called the feature space associated with $\bS$. We use $\langle \cdot,\cdot\rangle_{\mathcal H}$ to denote the inner product in the feature space $\mathcal H$. Then we have $S(\bx_i, \bx_j) = \bS_{ij} = \langle  \Phi (\bx_i),  \Phi (\bx_j)\rangle_{\mathcal H}$ where $\Phi$ is the feature mapping associated with $\mathcal H$. Because $\balpha =  \sum\limits_{y=1}^c \balpha^{y}$, it can be verified that

$ {\balpha}^{\top} {\bS} {\balpha} = \sum\limits_{i=1}^n \sum\limits_{j=1}^n \alpha_i \alpha_j \bS_{ij} = \langle \sum\limits_{i=1}^n \alpha_i \Phi (\bx_i), \sum\limits_{j=1}^n \alpha_j \Phi (\bx_j) \rangle \le c  \sum\limits_{y=1}^c \langle  e_y,e_y \rangle_{\mathcal H} $, where $e_y = \sum\limits_{i \colon y_i = y} {\balpha}^{y}_i \Phi (\bx_i)$. It follows that $ {\balpha}^{\top} {\bS} {\balpha} \le c {\sum\limits_{y=1}^c   {\balpha^{(y)}}^{\top} {\bS} {\balpha^{(y)}} } \le cB^2$.
\end{proof}

\subsection{Tightness of the Bound}
\label{sec::tightness-of-bound}
Note that the generalization error $\Pr\left[Y \neq f(X)\right]$ is bounded by ${\hat R_{n}(f)} = \frac{1}{n}  \sum\limits_{i=1}^n \Phi \Big(\frac{h_S(\bx_i,y_i) - \sum\limits_{y \neq y_i}{h_S(\bx_i,y)}}{\gamma} \Big)$ in Theorem~\ref{theorem::similarity-error}. The underlying principle behind this bound and all such bounds in the statistical machine learning literature such as~\citet{Bartlett2003} is the following property about empirical process (adjusted using our notations):
\bal\label{eq:empirical-process-inequality}
& \cfrakR(\cH) \le \E_{\bx,y} \sup_{h \in \cH} |\E_{\bx,y} \hat R_{n}(f) - R_{n}(f) | \le 2\cfrakR(\cH),
\eal%
where $\E_{\bx,y}$ indicates expectation with respect to random couple $(\bx,y) \sim P_{XY}$, and $P_{XY}$ is a joint distribution in a discriminative model $M_{\cY}= (\cS, f)$. (\ref{eq:empirical-process-inequality}) is introduced in the classical properties of empirical process in~\citet{gine1984-limit-theorems-empirical-process}. By Lemma~\ref{lemma::RC-bound} of this paper, with probability at least $1-\delta$ over the data $\{\bx_i\}_{i=1}^n \overset{\text{i.i.d.}}{\sim} P_{XY}$,
\bal\label{eq:cH-bound}
& \cfrakR(\cH) \le \frac{(2c-1){c}}{\sqrt n}B + {\sqrt 2}{Bc}(2c-1)\sqrt{\frac{\ln{\frac{2}{\delta}}}{2n}} = \cO(\frac{B}{\sqrt{n}})
\eal%
for some constant $B$. It follows from (\ref{eq:empirical-process-inequality}), (\ref{eq:cH-bound}), and concentration inequality (such as McDiarmid's inequality) that for each sufficiently large $n$, with large probability, $\sup_{h \in \cH} |\E_{\bx,y} \hat R_{n}(f) - R_{n}(f) |$ is less than $\cO(\frac{B}{\sqrt{n}})$. Therefore, we can bound the expectation of the empirical loss, i.e., $\E_{\bx,y} \hat R_{n}(f)$, tightly using the empirical loss $R_{n}(f)$ uniformly over the function space $\cH$.

\section{Theoretical Properties of the Coordinate Descent Algorithm in Section~\ref{sec::experiments}}
\label{sec::theory-coordinate-descent}
In this subsection, we give a detailed explanation about the theoretical properties of the coordinate descent algorithm presented in Section~\ref{sec::experiments}. We first explain how the objective function of CDSK (\ref{eq:obj-cdsk}) is connected to the objective function (\ref{eq:obj-cds-discrete}) developed in our theoretical analysis. It should be emphasized that (\ref{eq:obj-cds-discrete}) cannot be directly used for data clustering since it cannot avoids the trivial solution where all the data are in a single cluster. We adopt the broadly used formulation of normalized cut and use $\sum\limits_{k=1}^c \frac{{{\rm cut}(\bA_k, \bar \bA_k)} }{{\rm vol}(\bA_k)}$ to replace $\sum\limits_{i < j} S_{ij}^{K}  {\1}_{y_i \neq y_j}$ in (\ref{eq:obj-cds-discrete}), leading to the following optimization problem:%\vspace{-0.15in}
\bal\label{eq:obj-cds-discrete-ncut}\tag{9'}
&\min_{\balpha \in \Lambda,\cY = \{y_i\}_{i=1}^n}
\sum\limits_{k=1}^c \frac{{{\rm cut}(\bA_k, \bar \bA_k)} }{{\rm vol}(\bA_k)} - \sum\limits_{i,j=1}^n {\frac{\alpha_i + \alpha_j}{2}} {K_{\tau}(\bx_i-\bx_j)} + {\lambda} \balpha^{\top} {\bK} \balpha \triangleq \bar Q(\balpha,\cY), \nonumber
\eal%
where $\{\bA\}_{k=1}^c$ are $c$ data clusters according to the cluster labels $\{y_i\}$, $\bar \bA_k$ is the complement of $\bA_k$, ${\rm cut}(\bA,\bB) = \sum\limits_{\bx_i \in \bA, \bx_j \in \bB} S_{ij}^{K}$, ${\rm vol}(\bA) = \sum\limits_{\bx_i \in \bA, 1\le j \le n} S_{ij}^{K}$.
We have the following theorem, which can be derived based on Theorem $4.1$ in the work of multi-way Cheeger inequalities~\citep{Lee2012-high-order-cheeger-inequality}.

\vspace{0.1in}

\begin{theorem}\label{theorem::multi-way-cheeger-inequalities}
$\min_{\balpha \in \Lambda,\cY }  \sum\limits_{k=1}^c \frac{{{\rm cut}(\bA_k, \bar \bA_k)} }{{\rm vol}(\bA_k)} \lesssim \sum\limits_{t=1}^c \sigma_t ({\bL^{\rm nor}})$, where ${\bL^{\rm nor}}$ is the normalized graph Laplacian ${\bL^{\rm nor}} = (\bD^{K})^{-1/2}{\bL^{K}}(\bD^{K})^{-1/2}$, $a \lesssim b$ indicates $a<C b$ for some constant $C$ and $\sigma_t(\cdot)$ indicates the $t$-th smallest singular value of a matrix.
\end{theorem}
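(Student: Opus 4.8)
The plan is to fix an arbitrary weight vector $\balpha\in\Lambda$ and reduce the statement to the higher-order (multi-way) Cheeger inequality applied to the weighted graph whose edge weights are $\bS^{K}$. First I would record the setup: since $\lambda$ is tuned so that every $S^{K}_{ij}\ge 0$, the matrix $\bS^{K}$ is a legitimate nonnegative symmetric weight matrix, $\bL^{K}=\bD^{K}-\bS^{K}$ is a genuine graph Laplacian, and $\bL^{\rm nor}=(\bD^{K})^{-1/2}\bL^{K}(\bD^{K})^{-1/2}$ is symmetric positive semidefinite with spectrum in $[0,2]$, so $\sigma_t(\bL^{\rm nor})=\lambda_t(\bL^{\rm nor})$, the $t$-th smallest eigenvalue. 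Observing that $\sum_{k=1}^{c}\frac{{\rm cut}(\bA_k,\bar\bA_k)}{{\rm vol}(\bA_k)}$ is exactly the sum of the conductances $\phi_G(\bA_k)$ of the blocks of a $c$-partition $\{\bA_k\}$, the theorem becomes: the minimum over all $c$-partitions of $\sum_k\phi_G(\bA_k)$ is $\lesssim\sum_{t=1}^{c}\lambda_t(\bL^{\rm nor})$, after which the outer $\min_{\balpha}$ is harmless since the inequality holds for every $\balpha$. Note that the spectral relaxation of normalized cut already yields the reverse bound $\sum_{t\le c}\lambda_t(\bL^{\rm nor})\le\min{\rm Ncut}$, so the content here is the genuinely nontrivial (Cheeger) direction.

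Next I would invoke Theorem~4.1 of \citet{Lee2012-high-order-cheeger-inequality}, whose content, together with the accompanying spectral-embedding and sweep-cut rounding argument, is that the bottom $O(c)$ eigenvectors of $\bL^{\rm nor}$ can be converted into $c$ pairwise-disjoint nonempty vertex sets $S_1,\dots,S_c$ with $\max_{1\le k\le c}\phi_G(S_k)\le O(c^{2})\sqrt{\lambda_c(\bL^{\rm nor})}$, i.e.\ the $c$-way expansion constant $\rho_G(c)$ obeys this bound. From the disjoint family I would build an honest $c$-partition $\{\bA_k\}$ by dispatching each vertex in $\{1,\dots,n\}\setminus\bigcup_k S_k$ to a block it is most strongly connected to (or simply absorbing all leftover vertices into one block) and then checking that this perturbs each numerator ${\rm cut}(\bA_k,\bar\bA_k)$ and denominator ${\rm vol}(\bA_k)$ only in a manner controlled by the conductance bounds already established, so that $\sum_{k=1}^{c}\phi_G(\bA_k)\le c\cdot O(c^{2})\sqrt{\lambda_c(\bL^{\rm nor})}$. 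Since the eigenvalues of $\bL^{\rm nor}$ are nonnegative, $\lambda_c(\bL^{\rm nor})\le\sum_{t=1}^{c}\lambda_t(\bL^{\rm nor})$, and folding the polynomial factor in $c$ and the Cheeger constant into $\lesssim$ gives the bound; taking $\min_{\balpha\in\Lambda}$ on both sides then finishes.

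The step I expect to be the main obstacle is this final conversion: the higher-order Cheeger inequality is intrinsically square-rooted, so it only delivers $\sum_k\phi_G(\bA_k)\lesssim\sqrt{\sum_{t\le c}\lambda_t(\bL^{\rm nor})}$, which matches the stated right-hand side only when $\sum_{t\le c}\lambda_t(\bL^{\rm nor})$ is bounded below by a constant, for instance under a mild separation hypothesis $\lambda_{c+1}(\bL^{\rm nor})\gtrsim 1$, which incidentally also allows the square root to be removed. I would therefore either phrase the conclusion as ``$\lesssim$ a quantity monotonically controlled by the $c$ smallest eigenvalues'' or add such a separation assumption. A secondary technical point is the completion-to-a-partition step: one must verify that the volume normalization in the normalized cut survives the redistribution of leftover vertices, which works because ${\rm vol}(\bA_k)\ge{\rm vol}(S_k)$ while every extra cut edge created is incident to a leftover vertex and can be charged against the already-established conductance bounds.
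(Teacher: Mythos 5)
Your route is the same one the paper takes: the paper offers no written proof of this theorem at all, only the remark that it ``can be derived based on Theorem 4.1'' of \citet{Lee2012-high-order-cheeger-inequality}, so invoking the higher-order Cheeger inequality for the weighted graph with edge weights $\bS^{K}$ and then completing the resulting disjoint sets to a partition is precisely the intended argument. The obstacle you flag at the end, however, is not a phrasing issue but a genuine gap in the statement itself. Every form of the higher-order Cheeger inequality, including the cited Theorem 4.1, controls expansion by $\mathrm{poly}(c)\sqrt{\lambda_c(\bL^{\rm nor})}$, never linearly by $\lambda_c$ or by $\sum_{t\le c}\lambda_t(\bL^{\rm nor})$, and the square root cannot be removed in general: for the unweighted $n$-cycle with $c=2$, the optimal value of $\sum_{k}{\rm cut}(\bA_k,\bar{\bA}_k)/{\rm vol}(\bA_k)$ is of order $1/n$ while $\sum_{t=1}^{2}\sigma_t(\bL^{\rm nor})$ is of order $1/n^2$, so no absolute constant $C$ makes the stated inequality hold. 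What the citation actually delivers is $\min_{\cY}\sum_k \phi(\bA_k)\lesssim \mathrm{poly}(c)\sqrt{\lambda_c(\bL^{\rm nor})}$, which, combined with the easy spectral-relaxation direction $\sum_{t\le c}\lambda_t\le \min \mathrm{Ncut}$ (which, as you correctly note, points the wrong way), sandwiches the combinatorial quantity but does not give the claimed linear upper bound. Your two proposed repairs --- assuming a separation condition such as $\lambda_{c+1}(\bL^{\rm nor})\gtrsim 1$, or weakening the conclusion to a monotone function of the bottom $c$ eigenvalues --- are exactly the kind of patch required; the paper applies neither, and one could only rescue the statement as written by letting $C$ depend on the data and on $\balpha$, which would render it vacuous.

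Your secondary concern is also a real step the paper silently skips: the higher-order Cheeger theorem produces $c$ pairwise-disjoint nonempty sets, not a partition, so the leftover vertices must be redistributed and the effect on each ${\rm cut}(\bA_k,\bar{\bA}_k)$ and ${\rm vol}(\bA_k)$ controlled. Your charging argument (volumes only grow; every new cut edge is incident to a reassigned vertex) is the standard fix, though it is cleanest if you absorb all leftover vertices into a single block rather than dispatching them greedily, since a greedy assignment can inflate the cut of a small-volume block by more than a constant factor. In short, your proposal is faithful to the paper's intended derivation and is more honest than the paper about what that derivation actually proves; the square-root discrepancy you identify is a flaw in the theorem as stated, not in your reasoning.
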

Based on Theorem~\ref{theorem::multi-way-cheeger-inequalities}, we resort to solve the following more tractable problem, that is,
\bal\label{eq:obj-cds-discrete-ncut-1}\tag{9''}
&\min_{\balpha \in \Lambda}
\sum\limits_{t=1}^k \sigma_t ({\bL^{\rm nor}}) - \sum\limits_{i,j=1}^n {\frac{\alpha_i + \alpha_j}{2}} {K_{\tau}(\bx_i-\bx_j)} + {\lambda} \balpha^{\top} {\bK} \balpha
\triangleq Q(\balpha), \nonumber
\eal%
because $Q$ (9'') is an upper bound for $\bar Q$ (9') up to a constant scaling factor. It can be verified that problem (\ref{eq:obj-cdsk}) is equivalent to (9''), and (9'') is the underlying optimization problem for data clustering in Section~\ref{sec::experiments}. The following proposition shows that the iterative coordinate descent algorithm in Section~\ref{sec::experiments} reduces the value of $Q$ at each iteration.

\vspace{0.1in}

\begin{proposition}\label{proposition::CDSK-EM}
The coordinate descent algorithm for problem (\ref{eq:obj-cdsk}) reduces the value of the objective function $Q(\balpha)$ at each iteration.
\end{proposition}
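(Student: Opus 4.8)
The plan is to show that each of the two subproblems in Algorithm~\ref{alg:CDSK} — the $\bY$-update (\ref{eq:obj-cdsk-1}) and the $\balpha$-update (\ref{eq:obj-cdsk-2}) — does not increase the objective $Q$, and hence the full coordinate descent sweep decreases it. The first observation is that $Q(\balpha)$ as defined in (9'') coincides, up to the transformation established just before the proposition, with the objective of (\ref{eq:obj-cdsk}) evaluated at the optimal $\bY$ for the current $\balpha$; that is, for fixed $\balpha$ the quantity $\sum_{t=1}^c \sigma_t(\bL^{\mathrm{nor}})$ equals $\min_{\bY} \mathrm{Tr}(\bY^\top \bL^K \bY)$ subject to $\bY^\top \bD^K \bY = \bI_c$, by the standard Ky Fan / Courant–Fischer characterization of the sum of the smallest eigenvalues of the normalized Laplacian (note $\bL^{\mathrm{nor}}$ and $\bL^K$ are congruent via $\bD^K$, so their generalized eigenproblems agree). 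So I would first record this identity, making $Q(\balpha) = \min_{\bY \colon \bY^\top \bD^K \bY = \bI_c} G(\balpha,\bY)$ where $G$ denotes the objective of (\ref{eq:obj-cdsk}) times $2$ (the leading $\tfrac12$ is immaterial).

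Next I would track one iteration. Let $(\balpha^{(t)}, \bY^{(t)})$ be the iterate at the start of iteration $t+1$. Solving (\ref{eq:obj-cdsk-1}) with $\balpha = \balpha^{(t)}$ fixed produces $\bY^{(t+1)}$ that globally minimizes $\mathrm{Tr}(\bY^\top \bL^K \bY)$ over the constraint set; since the other two terms of $G$ do not depend on $\bY$, this gives $G(\balpha^{(t)}, \bY^{(t+1)}) \le G(\balpha^{(t)}, \bY^{(t)})$, and in fact $G(\balpha^{(t)},\bY^{(t+1)}) = Q(\balpha^{(t)})$ by the identity above. Then solving (\ref{eq:obj-cdsk-2}) with $\bY = \bY^{(t+1)}$ fixed produces $\balpha^{(t+1)}$ minimizing $G(\cdot, \bY^{(t+1)})$ over $\Lambda$ (a convex QP, so the minimizer is attained and the decrease is genuine), whence $G(\balpha^{(t+1)}, \bY^{(t+1)}) \le G(\balpha^{(t)}, \bY^{(t+1)}) = Q(\balpha^{(t)})$. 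Finally, applying the identity once more in the direction $Q(\balpha^{(t+1)}) = \min_{\bY} G(\balpha^{(t+1)}, \bY) \le G(\balpha^{(t+1)}, \bY^{(t+1)})$, I chain the inequalities to conclude $Q(\balpha^{(t+1)}) \le Q(\balpha^{(t)})$, which is the assertion.

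I expect the only delicate point to be the eigenvalue identity itself — specifically verifying that $\min_{\bY^\top \bD^K \bY = \bI_c}\mathrm{Tr}(\bY^\top \bL^K \bY) = \sum_{t=1}^c \sigma_t(\bL^{\mathrm{nor}})$. This needs $\bD^K$ to be positive definite (guaranteed by the choice of $\lambda$ making $\bS^K \ge 0$ entrywise with positive diagonal dominance, so the row sums $[\bD^K]_{ii}$ are positive), after which the substitution $\bZ = (\bD^K)^{1/2}\bY$ turns the problem into $\min_{\bZ^\top \bZ = \bI_c} \mathrm{Tr}(\bZ^\top \bL^{\mathrm{nor}} \bZ)$, and the Ky Fan theorem gives the sum of the $c$ smallest eigenvalues of $\bL^{\mathrm{nor}}$, which are its $c$ smallest singular values since $\bL^{\mathrm{nor}}$ is PSD. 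Everything else is bookkeeping: each subproblem is solved to global optimality (closed-form SVD for the first, convex QP for the second), so monotone non-increase of $Q$ follows immediately from the chain of inequalities above. I would also remark that since $Q$ is bounded below on the compact set $\Lambda$, the monotone sequence $\{Q(\balpha^{(t)})\}$ converges, although convergence of the iterates themselves is not claimed.
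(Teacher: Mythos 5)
Your proof is correct and follows essentially the same route as the paper's: both treat (\ref{eq:obj-cdsk}) as a joint objective in $(\balpha,\bY)$ whose partial minimum over the constraint set $\{\bY \colon \bY^{\top}\bD^{K}\bY = \bI_c\}$ recovers $Q(\balpha)$, and then invoke the monotone non-increase of alternating minimization. The only difference is that you explicitly verify the spectral identity $\min_{\bY^{\top}\bD^{K}\bY=\bI_c}\mathrm{Tr}(\bY^{\top}\bL^{K}\bY)=\sum_{t=1}^{c}\sigma_t(\bL^{\mathrm{nor}})$ via the congruence $\bZ=(\bD^{K})^{1/2}\bY$ and the Ky Fan theorem, and you order the chain of inequalities with the correct iterate indices, both of which the paper's terser proof leaves implicit.
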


\begin{proof} Let $Q'(\balpha,\bY) \triangleq {\rm Tr} (\bY^{\top} \bL^{K} \bY) - \sum\limits_{i,j=1}^n {\frac{\alpha_i + \alpha_j}{2}} {K_{\tau}(\bx_i-\bx_j)} + {\lambda} \balpha^{\top} {\bK} \balpha$, and we use superscript to denote the iteration number of coordinate descent. At iteration $m$, after solving the subproblems (\ref{eq:obj-cdsk-1}) and (\ref{eq:obj-cdsk-2}), we have $Q'(\balpha^{(m)},\bY^{(m)}) = Q(\balpha^{(m)})$. At iteration $m+1$, by solving  the subproblems (4) and (3) in order again, we have
$Q'(\balpha^{(m+1)},\bY^{(m+1)}) = Q(\balpha^{(m+1)})$. Because of the nature of coordinate descent, $Q'(\balpha^{(m+1)},\bY^{(m+1)}) \le Q'(\balpha^{(m)},\bY^{(m)})$, it follows that $Q(\balpha^{(m+1)}) \le Q(\balpha^{(m)})$.
\end{proof}

Based on Proposition~\ref{proposition::CDSK-EM}, the iterations of coordinate descent are similar to that of EM algorithms and they reduce the value of $Q$, where $\bY$ plays the role of latent variable for EM algorithms.

\section{More Details about Optimization of CDSK}
\label{sec::additional-experiments}

The optimization of CDSK comprises $M$ iterations of coordinate descent, wherein each iteration solves the following two subproblems.

1) With constant $\balpha$,
\bal\label{eq:obj-cdsk-1-appendix}
&\min_{\bY \in \RR^{n \times c }}
{\rm Tr} (\bY^{\top} \bL^{K} \bY)
 \quad s.t. \,\, \bY^{\top} \bD^{K} \bY  = \bI_c,
\eal%

2) With constant $\bY$,
\bal\label{eq:obj-cdsk-2-appendix}
&\min_{\balpha \in \Lambda, }
{\rm Tr} (\bY^{\top} \bL^{K} \bY) - \sum\limits_{i,j=1}^n {\frac{\alpha_i + \alpha_j}{2}} {K_{\tau}(\bx_i-\bx_j)} + {\lambda} \balpha^{\top} {\bK} \balpha \nonumber \\
&\quad s.t. \,\, \bY^{\top} \bD^{K} \bY  = \bI_c,
\eal%

The first subproblem (\ref{eq:obj-cdsk-1-appendix}) takes $\cO(n^2c)$ steps using truncated Singular Value Decomposition (SVD) by Krylov subspace iterative method. We adopt Sequential Minimal Optimization (SMO)~\citep{Platt98-SMO} to solve the second subproblem (\ref{eq:obj-cdsk-2-appendix}), which takes roughly $\cO(n^{2.1})$ steps as reported in~\citet{Platt98-SMO}. SMO is an iterative algorithm where each iteration of SMO solves the quadratic programming (\ref{eq:obj-cdsk-2-appendix}) with respect to only two elements of the weights $\balpha$, so that each iteration of SMO can be performed efficiently. Therefore, the overall time complexity of CDSK is $\cO(M c n^2 + M n^{2.1})$.

\section{Proofs}\label{sec::proofs}

\subsection{Proof of Lemma~\ref{lemma::decompose-similarity}}

Before stating the proof of Lemma~\ref{lemma::decompose-similarity}, we introduce the famous spectral theorem in operator theory below.
\begin{theorem}\label{theorem::spectral-theorem}
\textbf{(Spectral Theorem)} Let $L$ be a compact linear operator on a Hilbert space $\cH$.
Then there exists in H an orthonormal basis $\{\phi_1, \phi_2, \ldots\}$ consisting of eigenvectors of $L$. If $\lambda_k$
is the eigenvalue corresponding to $\phi_k$, then the set $\{\lambda_k\}$ is either finite or $\lambda_k \to 0$ when $k \to \infty$.
In addition, the eigenvalues are real if $L$ is self-adjoint.
\end{theorem}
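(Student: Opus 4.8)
The plan is to prove the theorem in the form that is actually invoked in the paper, namely for a compact \emph{self-adjoint} operator $L$ on $\cH$; note that the clean ``orthonormal basis of eigenvectors'' conclusion genuinely requires self-adjointness (a general compact operator, e.g. the Volterra operator, may have no eigenvectors at all), and the operator $L_S$ to which the theorem is applied is self-adjoint precisely because $S$ is symmetric. The proof proceeds by first extracting a single eigenvector whose eigenvalue attains the operator norm in modulus, then peeling it off and iterating on its orthogonal complement, and finally showing that the resulting orthonormal system together with $\ker L$ exhausts $\cH$.

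First I would carry out the variational step, which is the crux. For self-adjoint $L$ one has $\|L\| = \sup_{\|\bx\|=1}\abs{\iprod{L\bx}{\bx}}$. Choosing $\lambda \in \{+\|L\|,-\|L\|\}$ realized as a limit $\iprod{L\bx_n}{\bx_n}\to\lambda$ along unit vectors $\bx_n$, I would expand
\[
\|L\bx_n-\lambda\bx_n\|^2 = \|L\bx_n\|^2 - 2\lambda\iprod{L\bx_n}{\bx_n} + \lambda^2 \le 2\lambda^2 - 2\lambda\iprod{L\bx_n}{\bx_n} \longrightarrow 0,
\]
using $\|L\bx_n\|\le\|L\|=\abs{\lambda}$ and $\|\bx_n\|=1$, so that $L\bx_n-\lambda\bx_n\to 0$. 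Compactness then lets me pass to a subsequence along which $L\bx_n\to\by$; hence $\lambda\bx_n\to\by$, and if $L\neq 0$ (so $\lambda\neq 0$) then $\bx_n\to\phi:=\by/\lambda$ with $\|\phi\|=1$ and $L\phi=\lambda\phi$. This conversion of a maximizing sequence into a genuine eigenvector via compactness is the step I expect to be the main obstacle.

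Next I would iterate. Given $\phi_1$ with eigenvalue $\lambda_1$, the subspace $\{\phi_1\}^\perp$ is $L$-invariant, since for $\bx\perp\phi_1$ one has $\iprod{L\bx}{\phi_1}=\iprod{\bx}{L\phi_1}=\lambda_1\iprod{\bx}{\phi_1}=0$; the restriction is again compact self-adjoint, so the variational step yields $\phi_2\perp\phi_1$ with $\abs{\lambda_2}=\|L|_{\{\phi_1\}^\perp}\|\le\abs{\lambda_1}$. Repeating produces an orthonormal system $\{\phi_k\}$ with $\abs{\lambda_1}\ge\abs{\lambda_2}\ge\cdots$. If the process never terminates I would show $\lambda_k\to 0$ by contradiction: were $\abs{\lambda_k}\ge\eps>0$ for all $k$, orthonormality would give $\|L\phi_k-L\phi_j\|^2=\lambda_k^2+\lambda_j^2\ge 2\eps^2$ for $k\neq j$, so $\{L\phi_k\}$ would have no convergent subsequence although $\{\phi_k\}$ is bounded, contradicting compactness.

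Finally I would verify completeness and reality. Set $M=\overline{\operatorname{span}}\{\phi_k\}$. Because $\|L|_{\{\phi_1,\ldots,\phi_k\}^\perp}\|=\abs{\lambda_{k+1}}\to 0$, the restriction of $L$ to $M^\perp$ has zero norm, so $M^\perp\subseteq\ker L$; conversely, for $\bx\in\ker L$ self-adjointness gives $\iprod{\bx}{\phi_k}=\lambda_k^{-1}\iprod{L\bx}{\phi_k}=0$, so $\ker L\subseteq M^\perp$, whence $M^\perp=\ker L$. Adjoining any orthonormal basis of $\ker L$ (eigenvectors for the eigenvalue $0$) to $\{\phi_k\}$ gives an orthonormal basis of $\cH$ consisting of eigenvectors, with the nonzero eigenvalues either finite in number or tending to $0$. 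The reality of the eigenvalues under self-adjointness is then immediate: from $L\phi=\lambda\phi$ with $\phi\neq 0$, $\lambda\|\phi\|^2=\iprod{L\phi}{\phi}=\iprod{\phi}{L\phi}=\overline{\lambda}\|\phi\|^2$ forces $\lambda=\overline{\lambda}$.
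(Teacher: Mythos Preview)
Your proof is correct and is the standard variational argument for the spectral theorem for compact self-adjoint operators; you also rightly observe that the statement as written requires self-adjointness (the Volterra operator shows that a merely compact operator need not have any eigenvectors), and that this is harmless here because $L_S$ is self-adjoint by the symmetry of $S$.

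As for comparison with the paper: there is nothing to compare. The paper does not prove this theorem; it merely records it as ``the famous spectral theorem in operator theory'' and then invokes it in the proof of Lemma~\ref{lemma::decompose-similarity}. So your proposal is not an alternative route to the paper's proof but rather a self-contained justification of a result the paper takes as background. If your goal is to match the paper, a one-line citation to a standard functional analysis text would suffice; if your goal is completeness, what you have written is fine.
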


%When $\cX$ is a compact set,
%with bounded square integrable, i.e., $\|S(\bx,\cdot)\|_{\cL^2} = \big(\int_{\cX} S^2(\bx, \bt) d \bt\big)^{\frac{1}{2}} \le M$
Recall that the integral operator by $S$ is defined as
\begin{align*}
&({L_S}f)(\bx) = \int {S(\bx,\bt)f(\bt)} d \bt,
\end{align*}%
and we are ready to prove Lemma~\ref{lemma::decompose-similarity}.

\begin{proof}[{\textbf{\textup{Proof of Lemma~\ref{lemma::decompose-similarity}}}}]
It can be verified that $L_S$ is a compact operator. Therefore, according to Theorem~\ref{theorem::spectral-theorem}, $\{\phi_k\}$ is an orthogonal basis of $\cL^2$. Note that $\phi_k$ is the eigenfunction of $L_S$ with eigenvalue $\lambda_k$ if ${L_S}{\phi_k} = {\lambda_k}{\phi_k}$.

With fixed $\bx \in \cX$, we then have
\begin{align*}
&\bigg|\sum\limits_{k = m}^{m+\ell} {\lambda_k}{\phi_k(\bx)}{\phi_k(\bt)}\bigg| \nonumber \\
\le&
\big(\sum\limits_{k = m}^{m+\ell} |{\lambda_k}||{\phi_k(\bx)}|^2\big)^{\frac{1}{2}} \cdot \big(\sum\limits_{k = m}^{m+\ell} |{\lambda_k}||{\phi_k(\bt)}|^2\big)^{\frac{1}{2}} \nonumber \\
\le&  \sqrt{C} \big(\sum\limits_{k = m}^{m+\ell} |{\lambda_k}||{\phi_k(\bx)}|^2\big)^{\frac{1}{2}}.
\end{align*}%
It follows that the series $\sum\limits_{k \ge 1} {\lambda_k}{\phi_k(\bx)}{\phi_k(\bt)}$ converges to a continuous function $e_{\bx}$ uniformly on $\bt$. This is because ${\phi_k} = \frac{{L_S}{\phi_k}}{\lambda_k}$ is continuous for nonzero $\lambda_k$.

On the other hand, for fixed $\bx \in \cX$,  as a function in $\cL^2$,
\begin{align*}
&S(\bx,\cdot) = \sum\limits_{k \ge 1} \langle S(\bx,\cdot), \phi_k\rangle \phi_k = \sum\limits_{k \ge 1} {\lambda_k} \phi_k(\bx) \phi_k(\cdot).
\end{align*}%

Therefore, for fixed $\bx \in \cX$, $S(\bx,\cdot) = \sum\limits_{k \ge 1} {\lambda_k}{\phi_k(\bx)}{\phi_k(\cdot)} = e_{\bx}(\cdot)$ almost surely w.r.t the Lebesgue measure. Since both are continuous functions, we must have $S(\bx,\bt) = \sum\limits_{k \ge 1} {\lambda_k}{\phi_k(\bx)}{\phi_k(\bt)}$ for any $\bt \in \cX$. It follows that $S(\bx,\bt) = \sum\limits_{k \ge 1} {\lambda_k}{\phi_k(\bx)}{\phi_k(\bt)}$ for any $\bx,\bt \in \cX$.

\vspace{0.1in}
We now consider two series which correspond to the positive eigenvalues and negative eigenvalues of $L_S$, namely $\sum\limits_{k \colon \lambda_k \ge 0} \lambda_k \phi_k(\bx) \phi_k (\cdot)$ and $\sum\limits_{k:\lambda_k < 0} |\lambda_k| \phi_k(\bx) \phi_k (\cdot)$. Using similar argument, for fixed $\bx$, both series converge to a continuous function, and we let
\begin{align*}
&S^{+}(\bx,\bt) = \sum\limits_{k \colon \lambda_k \ge 0} \lambda_k \phi_k(\bx) \phi_k (\bt), \nonumber \\
&S^{-}(\bx,\bt) = \sum\limits_{k:\lambda_k < 0} |\lambda_k| \phi_k(\bx) \phi_k (\bt).
\end{align*}%
$S^{+}(\bx,\bt)$ and $S^{-}(\bx,\bt)$ are continuous function in $\bx$ and $\bt$. All the eigenvalues of $S^{+}$ and $S^{-}$ are nonnegative, and it can be verified that both are PSD kernels since
\begin{align*}
\sum\limits_{i,j=1}^n c_i c_j S^{+}(\bx_i,\bx_j) &= \sum\limits_{i,j=1}^n c_i c_j \sum\limits_{k \colon \lambda_k \ge 0} \lambda_k \phi_k(\bx_i) \phi_k (\bx_j) \nonumber \\
&= \sum\limits_{k \colon \lambda_k \ge 0} \lambda_k   \sum\limits_{i,j=1}^n c_i c_j \phi_k(\bx_i) \phi_k (\bx_j) \\
& = \sum\limits_{k \colon \lambda_k \ge 0} \lambda_k (\sum\limits_{i=1}^n c_i \phi(\bx_i))^2 \ge 0.
\end{align*}%
Similarly argument applies to $S^{-}$. Therefore, $S$ is decomposed as $S(\bx,\bt) = S^{+}(\bx,\bt) - S^{-}(\bx,\bt)$.

\end{proof}

\subsection{Proof of Theorem~\ref{theorem::similarity-error}}

Lemma~\ref{lemma::RC-bound-S} will be used in the Proof of Theorem~\ref{theorem::similarity-error}. The following lemma is introduced for the proof of Lemma~\ref{lemma::RC-bound-S}, whose proof appears in the end of this subsection.

\begin{lemma}\label{lemma::RC-bound}
The Rademacher complexity of the class $\cH_S$ satisfies
\bal\label{eq:RC-bound}
&\cfrakR(\cH_S) \le (2c-1)\sum\limits_{y=1}^c \cfrakR(\cH_{S,y}).
\eal%
\end{lemma}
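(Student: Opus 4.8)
The plan is to reduce the Rademacher complexity of the margin class $\cH_S$ to the Rademacher complexities of the per-class hypothesis classes $\cH_{S,y}$. Recall that a margin function has the form $m_{h_S}(\bx,y) = h_S(\bx,y) - \argmax_{y' \neq y} h_S(\bx,y')$, where each $h_S(\cdot,y) \in \cH_{S,y}$; here I read $\argmax_{y' \neq y} h_S(\bx,y')$ as the \emph{value} $\max_{y' \neq y} h_S(\bx,y')$. The first step is to rewrite the maximum over $y' \neq y$ using the standard identity $\max_{j} a_j = \sum_{T} \pm(\cdots)$-style expansion, or more simply to bound the Rademacher complexity of the class of functions $(\bx,y)\mapsto \max_{y'\neq y} h_S(\bx,y')$ by a sum of Rademacher complexities via the contraction/"max is Lipschitz" argument: $\max$ of finitely many functions is $1$-Lipschitz in the sup norm, so by a vector-contraction type inequality (or by the elementary fact $\cfrakR(\max_k \cG_k) \le \sum_k \cfrakR(\cG_k)$ when the $\cG_k$ range over classes closed under negation — which $\cH_{S,y}$ is not, so I will instead use the cruder bound $\cfrakR(\{\max_k g_k : g_k \in \cG_k\}) \le \sum_k \cfrakR(\cG_k)$ that holds after symmetrization regardless). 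Since there are at most $c-1$ terms $y' \neq y$, and $y$ itself ranges over $c$ values, one collects: the "positive" part $h_S(\bx,y)$ contributes classes $\cH_{S,y}$ for each of the $c$ labels, and the "$\max$" part contributes, crudely, $c-1$ further copies, giving the factor $(2c-1)$.

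More concretely, the key steps in order: (1) By the subadditivity of Rademacher complexity under sums of function classes, $\cfrakR(\cH_S) \le \cfrakR(\{(\bx,y)\mapsto h_S(\bx,y)\}) + \cfrakR(\{(\bx,y)\mapsto \max_{y'\neq y} h_S(\bx,y')\})$. (2) The first term: the class $\{(\bx,y)\mapsto h_S(\bx,y) : h_S(\cdot,y)\in\cH_{S,y}\}$ is contained in $\sum_{y=1}^c \cH_{S,y}$ (thinking of $h_S(\bx,y)$ as selecting the $y$-th component), hence its Rademacher complexity is at most $\sum_{y=1}^c \cfrakR(\cH_{S,y})$. (3) The second term: using that $\max$ of $c-1$ reals is a $1$-Lipschitz function of the vector of arguments in $\ell_\infty$, and hence each pointwise maximum is dominated termwise, one gets $\cfrakR(\{(\bx,y)\mapsto \max_{y'\neq y} h_S(\bx,y')\}) \le (c-1)\sum_{y=1}^c \cfrakR(\cH_{S,y})$; adding the bounds from (2) and (3) yields the factor $1 + (c-1) = \ldots$ — I need to be careful here: a naive count gives $c$, not $2c-1$. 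The correct accounting treats the maximum over the $c-1$ competitors via the inequality $\cfrakR(\max) \le \sum_{y'} \cfrakR(\cH_{S,y'})$ over \emph{all} $c$ labels (dropping the restriction $y'\neq y$ only enlarges), so step (3) gives $(c-1)\cdot$... no — the cleanest route is to bound $\max_{y'\neq y} h_S(\bx,y') \le \sum_{y'\neq y} |h_S(\bx,y')|$ is too lossy; instead use that for the Rademacher sup the relevant bound is $\cfrakR(\{\max_{k\le c-1} g_k\}) \le \sum_{k\le c-1}\cfrakR(\cG_k)$ which, combined over the choice of which label is excluded, still gives at most $(c-1)\sum_{y}\cfrakR(\cH_{S,y})$, so the total is at most $(1+(c-1))\sum_y = c\sum_y$. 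To land exactly on $(2c-1)$ I will instead bound the margin function by peeling off one more layer — writing $m_{h_S}(\bx,y) = h_S(\bx,y) - \max_{y'\neq y}h_S(\bx,y')$ and bounding each of the two pieces by a sum over all $c$ classes, but the $\max$ piece requires $c-1$ contraction steps each producing a full $\sum_y\cfrakR(\cH_{S,y})$ term in the worst case, i.e. $(c-1)$ copies, plus the $1$ copy from the first piece... which is the $c$ count again. Since the stated constant is $2c-1$, the intended argument is almost certainly: $1$ copy for $h_S(\bx,y)$ and $c-1$ copies for bounding $\max_{y'\neq y}$ by peeling each competitor one at a time with a per-competitor Rademacher bound of $\sum_y \cfrakR(\cH_{S,y})$, summing to $(c-1)+$ ... — the honest thing is that I expect the paper's proof to use a union-type bound over the $c-1$ competitors where handling the nondifferentiable $\max$ costs a factor $2$ somewhere (e.g. via $|\max_k a_k| \le \max_k|a_k|$ and then splitting), yielding $2(c-1)+1 = 2c-1$.

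The main obstacle, and the place I would spend the most care, is exactly this bookkeeping with the multiclass $\max$ (or $\argmax$) in the margin: the class $\cH_{S,y}$ is \emph{not} symmetric (it requires $\balpha \ge \bzero$), so one cannot freely flip signs, which means the slick "contraction kills the max for free" argument does not directly apply and one pays an extra constant. I would handle it by: first symmetrizing, then for each fixed sign pattern $\{\sigma_i\}$ and each sample point bounding the contribution of $\max_{y'\neq y}h_S(\bx_i,y')$ by introducing an auxiliary index $j_i = \argmax_{y'\neq y_i} h_S(\bx_i,y')$ and noting there are at most $c-1$ such indices overall, so the sup decomposes into at most $c-1$ suprema each controlled by some $\cfrakR(\cH_{S,y'})$; together with the $c$ terms from the $h_S(\bx,y)$ piece — wait, that overshoots. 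I will ultimately follow whatever careful split the authors use; the structural content I am confident of is: subadditivity of $\cfrakR$ over class sums, plus a Lipschitz/contraction (or direct union) treatment of the multiclass maximum, with the non-negativity constraint on $\balpha$ being the reason the final constant is $2c-1$ rather than $c$.
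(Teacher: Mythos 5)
Your structural reading is right, and your final guess at the accounting --- one copy of $\sum_{y}\cfrakR(\cH_{S,y})$ from the $h_S(\cdot,y)$ term plus $2(c-1)$ copies from the maximum over the $c-1$ competitors, totalling $2c-1$ --- is exactly how the paper gets the constant. You also correctly diagnose \emph{why} a factor $2$ must appear: the classes $\cH_{S,y}$ are not closed under negation (they require $\balpha \ge \bzero$), so the maximum cannot be absorbed for free by a contraction argument. However, the proposal has two genuine gaps that keep it from being a proof. First, the key ingredient is the inequality $\cfrakR(\{\max_{1\le k\le m} g_k \colon g_k \in \cG_k\}) \le 2\sum_{k=1}^{m}\cfrakR(\cG_k)$; you oscillate between asserting it with constant $1$ (``the cruder bound \ldots that holds after symmetrization regardless'') and guessing that the factor $2$ comes from ``$|\max_k a_k|\le\max_k|a_k|$ and then splitting,'' which is not the mechanism. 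The paper proves it by writing $\E\sup|\cdot| \le \E[(\sup(\cdot))^{+}] + \E[(\sup(-\cdot))^{+}] = 2\,\E[(\sup(\cdot))^{+}]$, using that $-\sigma_i$ has the same distribution as $\sigma_i$, and then invoking the comparison inequality of Theorem 11 in \citet{koltchinskii2002}, which bounds the \emph{one-sided} functional $\E[(\sup_{\cH_{\max}}(\cdot))^{+}]$ by $\sum_k \E[(\sup_{\cG_k}(\cdot))^{+}]$. Without some such one-sided comparison, your step (3) is unsupported.

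Second, you never address how to pass from the margin class, whose members are evaluated at the \emph{random} label $y_i$ of each sample, to the fixed-label classes $\cH_{S,y}$. The paper does this before touching the max: it writes $m_{h_S}(\bx_i,y_i) = \sum_{y=1}^{c} m_{h_S}(\bx_i,y){\1}_{y=y_i}$, splits ${\1}_{y=y_i} = \frac{1}{2}\bth{(2{\1}_{y=y_i}-1)+1}$, and uses that $\sigma_i(2{\1}_{y=y_i}-1)$ has the same distribution as $\sigma_i$ to reduce to $\sum_{y=1}^{c}\frac{1}{n}\E\bth{\sup\abth{\sum_{i}\sigma_i m_{h_S}(\bx_i,y)}}$. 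It is this outer sum over the $c$ fixed labels, combined with the per-label bound $\cfrakR(\cH_{S,y}) + 2\sum_{y'\neq y}\cfrakR(\cH_{S,y'})$, that produces $1+2(c-1)=2c-1$ after summing; your repeated arrival at the constant $c$ is a symptom of skipping these two steps rather than a mere bookkeeping slip.
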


\vspace{0.1in}

\begin{lemma}\label{lemma::RC-bound-S}
Define $\Omega^{+}(\balpha) = {\sum\limits_{y=1}^c  { {\balpha^{(y)}}^{\top} {\bS^{+}} {\balpha^{(y)}} } }$ and $\Omega^{-}(\balpha) = {\sum\limits_{y=1}^c  { {\balpha^{(y)}}^{\top} {\bS^{-}} {\balpha^{(y)}} } }$. When $\Omega^{+}(\balpha) \le {B^{+}}^2$,$\Omega^{-}(\balpha) \le {B^{-}}^2$ for positive constant $B^+$ and $B^-$, $\sup_{\bx \in \cX} |S^{+}(\bx,\bx)| \le R^2$, $\sup_{\bx \in \cX} |S^{-}(\bx,\bx)| \le R^2$ for some $R > 0$, then with probability at least $1-\delta$ over the data $\{\bx_i\}_{i=1}^n$, the Rademacher complexity of the class $\cH_S$ satisfies
\bal\label{eq:RC-bound-S}
&\cfrakR(\cH_S) \le \frac{R(2c-1){c}(B^{+} + B^{-})}{\sqrt n} + 2c(2c-1)(B^{+}+B^{-})R^2\sqrt{\frac{\ln{\frac{2}{\delta}}}{2n}}.
\eal%
\end{lemma}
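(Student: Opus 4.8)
The plan is to start from Lemma~\ref{lemma::decompose-similarity}, which gives the decomposition $S = S^{+} - S^{-}$ into PSD kernels, and from Lemma~\ref{lemma::RC-bound}, which reduces the Rademacher complexity of $\cH_S$ to a sum of the per-class complexities $\cfrakR(\cH_{S,y})$. Thus the heart of the argument is a bound on $\cfrakR(\cH_{S,y})$ for a single label $y$. Since $h_S(\cdot,y) = h_{S^{+}}(\cdot,y) - h_{S^{-}}(\cdot,y)$ with $h_{S^{\pm}}(\cdot,y) = \sum_{i\colon y_i = y}\alpha_i S^{\pm}(\cdot,\bx_i)$, I would use subadditivity of Rademacher complexity to write $\cfrakR(\cH_{S,y}) \le \cfrakR(\cH_{S^{+},y}) + \cfrakR(\cH_{S^{-},y})$, reducing everything to the PSD-kernel case.

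For a PSD kernel $S^{+}$ with feature map $\Phi^{+}$ into its RKHS, $h_{S^{+}}(\bx,y) = \langle \sum_{i\colon y_i=y}\alpha_i\Phi^{+}(\bx_i),\, \Phi^{+}(\bx)\rangle$, so $\cH_{S^{+},y}$ is (contained in) a ball of linear functionals of RKHS-norm at most $\sqrt{\Omega^{+}(\balpha)} \le B^{+}$, evaluated at points $\Phi^{+}(\bx)$ of norm $\sqrt{S^{+}(\bx,\bx)} \le R$. The standard kernel-Rademacher bound (e.g.\ the one underlying~\citet[Theorem~21]{Bartlett2003}) then gives an empirical Rademacher complexity of order $\frac{1}{n}\sqrt{B^{+2}\sum_i S^{+}(\bx_i,\bx_i)} \le \frac{R B^{+}}{\sqrt n}$; the same holds for $S^{-}$ with $B^{-}$. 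Summing over the $c$ labels and multiplying by the factor $(2c-1)$ from Lemma~\ref{lemma::RC-bound} produces the leading term $\frac{R(2c-1)c(B^{+}+B^{-})}{\sqrt n}$. The catch is that Definition~\ref{def:RC} takes the expectation over $\{\bx_i\}$ as well, whereas the clean kernel bound is most naturally stated for the \emph{empirical} Rademacher complexity conditioned on the sample (here $\sum_i S^{\pm}(\bx_i,\bx_i)$ is random but bounded by $nR^2$). To pass from the conditional bound to the stated high-probability bound over the draw of $\{\bx_i\}$, I would invoke a bounded-differences / McDiarmid concentration argument: replacing one $\bx_i$ changes $\cfrakR$ (or rather its empirical version) by $O(R^2 (B^{+}+B^{-})/n)$ per class, which accumulates to a sensitivity of order $c(2c-1)(B^{+}+B^{-})R^2/n$, yielding the $\sqrt{\ln(2/\delta)/(2n)}$ deviation term with exactly the claimed constant $2c(2c-1)(B^{+}+B^{-})R^2$.

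So the key steps, in order, are: (1) apply Lemma~\ref{lemma::RC-bound} to reduce to $\sum_y \cfrakR(\cH_{S,y})$; (2) split $\cH_{S,y}$ via $S = S^{+}-S^{-}$ and subadditivity; (3) bound each PSD piece by the RKHS-ball kernel-Rademacher estimate, using $\Omega^{\pm}(\balpha)\le B^{\pm 2}$ and $S^{\pm}(\bx,\bx)\le R^2$, to get the $O(R B^{\pm}/\sqrt n)$ conditional bound; (4) use McDiarmid's inequality to convert the sample-conditional bound into the high-probability bound over $\{\bx_i\}$. The main obstacle I anticipate is step (4): carefully tracking the bounded-differences constant through the per-class decomposition and the $(2c-1)$ factor so that it matches the stated coefficient, and making sure the boundedness hypotheses $\sup_\bx |S^{\pm}(\bx,\bx)|\le R^2$ are exactly what is needed to control both the expectation (step 3) and the fluctuation (step 4). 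Step (3) itself is essentially the classical kernel bound and should be routine once the feature-space reformulation is in place.
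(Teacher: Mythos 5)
Your proposal follows essentially the same route as the paper's proof: reduce to $\sum_y \cfrakR(\cH_{S,y})$ via Lemma~\ref{lemma::RC-bound}, relax $\cH_{S,y}$ to an RKHS-ball class using the decomposition $S = S^{+} - S^{-}$ from Lemma~\ref{lemma::decompose-similarity}, bound the empirical Rademacher complexity by the standard kernel estimate $B^{\pm}\sqrt{\sum_i S^{\pm}(\bx_i,\bx_i)}/n \le RB^{\pm}/\sqrt{n}$, and transfer to the expectation over $\{\bx_i\}$ by McDiarmid with per-coordinate sensitivity of order $R^2(B^{+}+B^{-})/n$ per class. The only cosmetic difference is that you invoke subadditivity to treat the $S^{+}$ and $S^{-}$ pieces as separate classes, whereas the paper keeps them in a single class of differences and applies the triangle inequality inside the supremum; the resulting computation is the same.
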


\begin{proof}[{\textup {\bf Proof of Lemma~\ref{lemma::RC-bound-S}} }]
According to Lemma~\ref{lemma::decompose-similarity}, $S$ is decomposed into two PSD kernels as $S = S^{+} - S^{-}$. Therefore, the are two Reproducing Kernel Hilbert Spaces $\cH_S^{+}$ and $\cH_S^{-}$ that are associated with $S^{+}$ and $S^{-}$ respectively, and the canonical feature mappings in $\cH_S^{+}$ and $\cH_S^{-}$ are $\phi^{+}$ and $\phi^{-}$, with $S^{+}(\bx,\bt) = \langle \phi^{+}(\bx),  \phi^{+}(\bt) \rangle_{H_K^{+}}$ and $S^{-}(\bx,\bt) = \langle \phi^{-}(\bx),  \phi^{-}(\bt) \rangle_{H_K^{-}}$. In the following text, we will omit the subscripts $H_K^{+}$ and $H_K^{-}$ without confusion.

\vspace{0.1in}

For any $1 \le y \le c$,
\begin{align*}
&h_S(\bx,y) = \sum\limits_{i \colon y_i = y} {\alpha_i}{ S(\bx,{\bx_i})} = \langle \bw^{+}, \phi^{+}(\bx) \rangle - \langle \bw^{-}, \phi^{-}(\bx) \rangle
\end{align*}%
with $\|\bw^{+}\|^2 = {\balpha^{(y)}}^{\top} {\bS^{+}} {\balpha^{(y)}} \le {B^{+}}^2$ and $\|\bw^{-}\|^2 = {\balpha^{(y)}}^{\top} {\bS^{-}} {\balpha^{(y)}} \le {B^{-}}^2$.
Therefore,
\begin{align*}
&\cH_{S,y} \subseteq \tilde \cH_{S,y} = \{(\bx,y) \to \langle \bw^{+}, \phi^{+}(\bx) \rangle - \langle \bw^{-}, \phi^{-}(\bx) \rangle, \nonumber \\
&\|\bw^{+}\|^2 \le {B^+}^2, \|\bw^{-}\|^2 \le {B^-}^2\}, 1 \le y \le c,
\end{align*}%
and $\cfrakR(\cH_{S,y}) \subseteq \cfrakR(\tilde \cH_{S,y})$. Since we are deriving upper bound for $\cfrakR(\cH_{S,y})$, we slightly abuse the notation and let $\cH_{S,y}$ represent $\tilde \cH_{S,y}$ in the remaining part of this proof.

For $\bx,\bt \in \RR^d$ and any $h_S \in \cH_{S,y}$, we have
\begin{align*}
| h_S(\bx) - h_S(\bt)  | &= | \langle \bw^{+}, \phi^{+}(\bx) \rangle - \langle \bw^{-}, \phi^{-}(\bx) \rangle - \langle \bw^{+}, \phi^{+}(\bt) \rangle + \langle \bw^{-}, \phi^{-}(\bt) \rangle | \\
& = |\langle \bw^{+}, \phi^{+}(\bx) - \phi^{+}(\bt)\rangle + \langle \bw^{-}, \phi^{-}(\bt) - \phi^{-}(\bx) \rangle | \\
& \le B^{+} \|\phi^{+}(\bx) - \phi^{+}(\bt)\|  + B^{-} \|\phi^{-}(\bx)-\phi^{-}(\bt)\|) \nonumber \\
&\le (B^{+}+B^{-}) \sqrt{S^{+}(\bx,\bx) + S^{+}(\bt,\bt) + 2 \sqrt{S^{+}(\bx,\bx)S^{+}(\bt,\bt)}   }   \\
&\le 2R^2(B^{+}+B^{-}).
\end{align*}%
We now approximate the Rademacher complexity of the function class $\cH_{S,y}$ with its empirical version $\hat \cfrakR(\cH_{S,y})$ using the sample $\{\bx_i\}$. For each $1 \le y \le c$, Define $E_{\{\bx_i\}}^{(y)} = \hat \cfrakR(\cH_{S,y}) = {\E_{\{\sigma_i\}}}\bigg[\sup_{h_S(\cdot,y) \in \cH_{S,y}} {\Big| \frac{1}{n}\sum\limits_{i=1}^n {\sigma_i} {h_S(\bx_i,y)} \Big| } \bigg]$, then $\sum\limits_{y=1}^c \cfrakR(\cH_{S,y}) = \E_{\{\bx_i\}} \Big[\sum\limits_{y=1}^c E_{\{\bx_i\}}^{(y)}\Big]$, and
\begin{align*}
&\sup_{\bx_1,\ldots, \bx_n, \bx_t^{'}} \Big| E_{\bx_1,\ldots,\bx_{t-1},\bx_t,\bx_{t+1},\ldots,\bx_n}^{(y)} - E_{\bx_1,\ldots,\bx_{t-1},\bx_t^{'},\bx_{t+1},\ldots,\bx_n}^{(y)}\Big| \\
& = \sup_{\bx_1,\ldots, \bx_n, \bx_t^{'}}\Bigg| {\E_{\{\sigma_i\}}}\bigg[\sup_{h_S(\cdot,y) \in \cH_{S,y}} {\Big| \frac{1}{n}\sum\limits_{i=1}^n {\sigma_i} {h_S(\bx_i,y)} \Big| } -\sup_{h_S(\cdot,y) \in \cH_{S,y}} \Big| \frac{1}{n}\sum\limits_{i \neq t} {\sigma_i} {h_S(\bx_i,y)} + \frac{h_S(\bx_t^{'},y)}{n} \Big|
\bigg] \Bigg| \\
&\le \sup_{\bx_1,\ldots, \bx_n, \bx_t^{'}}{\E_{\{\sigma_i\}}}\Bigg[ \bigg| \sup_{h_S(\cdot,y) \in \cH_{S,y}} \Big|{\frac{1}{n}\sum\limits_{i=1}^n {\sigma_i} {h_S(\bx_i,y)} \Big| } -\sup_{h_S(\cdot,y) \in \cH_{S,y}} \Big| \frac{1}{n}\sum\limits_{i \neq t} {\sigma_i} {h_S(\bx_i,y)} + \frac{h_S(\bx_t^{'},y)}{n} \Big| \bigg|
\Bigg]  \\
&\le \sup_{\bx_1,\ldots, \bx_n, \bx_t^{'}}{\E_{\{\sigma_i\}}}\Bigg[ \sup_{h_S(\cdot,y) \in \cH_{S,y}} \bigg|  \Big|{\frac{1}{n}\sum\limits_{i=1}^n {\sigma_i} {h_S(\bx_i,y)} \Big| } -\Big| \frac{1}{n}\sum\limits_{i \neq t} {\sigma_i} {h_S(\bx_i,y)} + \frac{h_S(\bx_t^{'},y)}{n} \Big| \bigg|
\Bigg] \\
&\le \sup_{\bx_1,\ldots, \bx_n, \bx_t^{'}}{\E_{\{\sigma_i\}}}\Bigg[ \sup_{h_S(\cdot,y) \in \cH_{S,y}} \bigg|  \frac{1}{n}\sum\limits_{i=1}^n {\sigma_i} {h_S(\bx_i,y)} -\Big( \frac{1}{n}\sum\limits_{i \neq t} {\sigma_i} {h_S(\bx_i,y)} + \frac{h_S(\bx_t^{'},y)}{n} \Big) \bigg|
\Bigg] \\
&= \sup_{\bx_t, \bx_t^{'}} {\E_{\{\sigma_i\}}}\Bigg[ \sup_{h_S(\cdot,y) \in \cH_{S,y}} \bigg|  \frac{h_S(\bx_t,y)}{n} - \frac{h_S(\bx_t^{'},y)}{n}
\bigg|
\Bigg] \nonumber \\
&\le \frac{2R^2(B^{+}+B^{-})}{n}.
\end{align*}%
It follows that $\Big| \sum\limits_{y=1}^c E_{\bx_1,\ldots,\bx_{t-1},\bx_t,\bx_{t+1},\ldots,\bx_n}^{(y)} - \sum\limits_{y=1}^c E_{\bx_1,\ldots,\bx_{t-1},\bx_t^{'},\bx_{t+1},\ldots,\bx_n}^{(y)}\Big| \le \frac{2R^2(B^{+}+B^{-})c}{n}$. According to the McDiarmid's Inequality,
\bal\label{eq:RC-empirical-S}
&\Pr\Big[ \big| \sum\limits_{y=1}^c \hat \cfrakR(\cH_{S,y}) - \sum\limits_{y=1}^c \cfrakR(\cH_{S,y})  \big| \ge \varepsilon \Big]  \le 2\exp\big(-\frac{n\varepsilon^2}{2(B^{+}+B^{-})^2R^4c^2}\big).
\eal%
Now we derive the upper bound for the empirical Rademacher complexity:
\bal\label{eq:erc-similarity}
&\sum\limits_{y=1}^c \hat \cfrakR(\cH_{S,y}) = \sum\limits_{y=1}^c {\E}_{\{\sigma_i\}}\left[\sup_{h_S \in \cH_{S,y}} {\bigg|\frac{1}{n} \sum\limits_{i=1}^n \sum\limits_{j=1}^n {\sigma_i}h_S(\bx_i) \bigg| } \right] \\
& \le \frac{1}{n} \sum\limits_{y=1}^c {\E}_{\{\sigma_i\}} \left[ \sup_{\|\bw^{+}\| \le B^{+}, \|\bw^{-}\| \le B^{-}} \bigg| \sum\limits_{i=1}^n {\sigma_i} \big( \langle  \bw^{+}, \phi^{+}(\bx_i) \rangle  - \langle \bw^{-}, \phi^{-}(\bx_i) \rangle \big)    \bigg|  \right] \nonumber \\
& \le \frac{1}{n} \sum\limits_{y=1}^c {\E}_{\{\sigma_i\}}\left[ {B^{+}} \| \sum\limits_{i=1}^n {\sigma_i} \phi^{+}(\bx_i) \| + {B^{-}} \| \sum\limits_{i=1}^n {\sigma_i} \phi^{-}(\bx_i) \| \right] \nonumber \\
& = \frac{B^{+}c}{n} {\E}_{\{\sigma_i\}}\left[  \| \sum\limits_{i=1}^n {\sigma_i} \phi^{+}(\bx_i) \| \right] +
\frac{B^{-}c}{n} {\E}_{\{\sigma_i\}}\left[ \| \sum\limits_{i=1}^n {\sigma_i} \phi^{-}(\bx_i) \|  \right] \nonumber \\
& \le \frac{B^{+}c}{n} \sqrt{ {\E}_{\{\sigma_i\}}\left[  \| \sum\limits_{i=1}^n {\sigma_i} \phi^{+}(\bx_i) \|^2 \right] } +
\frac{B^{-}c}{n} \sqrt{ {\E}_{\{\sigma_i\}}\left[  \| \sum\limits_{i=1}^n {\sigma_i} \phi^{-}(\bx_i) \|^2 \right] } \nonumber \\
& \le \frac{B^{+}c}{n} \sqrt{ \sum\limits_{i}^n s^{+}(\bx_i,\bx_i) } +
\frac{B^{-}c}{n} \sqrt{  \sum\limits_{i}^n s^{-}(\bx_i,\bx_i) } \le \frac{Rc}{\sqrt{n}} (B^{+} + B^{-}). \nonumber
\eal%
By Lemma~\ref{lemma::RC-bound}, (\ref{eq:RC-empirical-S}) and (\ref{eq:erc-similarity}), with probability at least $1-\delta$, we have
\bal\label{eq:kernel-error-seg6}
&\cfrakR(\cH_S) \le (2c-1)\sum\limits_{y=1}^c \cfrakR(\cH_{S,y}) \le \frac{R(2c-1){c}(B^{+} + B^{-})}{\sqrt n} + 2c(2c-1)(B^{+}+B^{-})R^2\sqrt{\frac{\ln{\frac{2}{\delta}}}{2n}}.
\eal%

\end{proof}

\begin{proof}[{\textup {\bf Proof of Theorem~\ref{theorem::similarity-error} }}]

According to Theorem 2 in~\citet{koltchinskii2002}, with probability $1-\delta$ over the labeled data $\cS$ with respect to any distribution in $\cP$, the generalization error of the kernel classifier $f_S$ satisfies
\bal\label{eq:similarity-error-lemma}
&{R(f_S)} \le {\hat R_{n}(f_S)}+ \frac{8}{\gamma}\cfrakR(\cH_S) + \sqrt{\frac{\ln{2/\delta}}{2n}},
\eal%
where ${\hat R_{n}(f_S)} = \frac{1}{n} \sum\limits_{i=1}^n {\Phi}(\frac{m_{h_S}(\bx_i,y_i)}{\gamma})$ is empirical error of the classifier for $\gamma > 0$. Due to the facts that $m_{h_S}(\bx,y) = h_S(\bx_i,y_i) - \argmax_{y' \neq y_i} h_S(\bx_i,y')$, $\balpha$ is a positive vector and $S$ is nonnegative, we have $m_{h_S}(\bx,y) \ge h_S(\bx_i,y_i) - \sum\limits_{y \neq y_i} h_S(\bx_i,y) $. Note that $\Phi(\cdot)$ is a non-increasing function, it follows that

\bal\label{eq:similarity-error-seg1-pre}
&{\Phi}(\frac{m_{h_S}(\bx_i,y_i)}{\gamma}) \le \Phi \Big(\frac{h_S(\bx_i,y_i) - \sum\limits_{y \neq y_i}{h_S(\bx_i,y)}}{\gamma} \Big).
\eal%

Applying Lemma~\ref{lemma::RC-bound-S}, (\ref{eq:similarity-error-theorem}) holds with probability $1-\delta$. When $\gamma \ge 1$, it can be verified that $\Big|h_S(\bx_i,y_i) - \sum\limits_{y \neq y_i}{h_S(\bx_i,y)}\Big| \le \sum\limits_{y=1}^c h_S(\bx_i,y) \le \sum\limits_{i=1}^n \alpha_i = 1 \le \gamma$ for all $(\bx_i,y_i)$, so that

\bal\label{eq:similarity-error-seg1}
&\Phi \Big(\frac{h_S(\bx_i,y_i) - \sum\limits_{y \neq y_i}{h_S(\bx_i,y)}}{\gamma} \Big) \le 1-\frac{h(\bx_i,y_i) - \sum\limits_{y \neq y_i}{h(\bx_i,y)}}{\gamma}.
\eal%

Note that when $h_S(\bx_i,y_i) - \sum\limits_{y \neq y_i}{h_S(\bx_i,y)} \ge 0$, then $\frac{1}{\gamma} \bth{h_S(\bx_i,y_i) - \sum\limits_{y \neq y_i}{h_S(\bx_i,y)}} \in [0,1]$ so that $\Phi \Big(\frac{h_S(\bx_i,y_i) - \sum\limits_{y \neq y_i}{h_S(\bx_i,y)}}{\gamma} \Big) = 1-\frac{h(\bx_i,y_i) - \sum\limits_{y \neq y_i}{h(\bx_i,y)}}{\gamma}$. If $h_S(\bx_i,y_i) - \sum\limits_{y \neq y_i}{h_S(\bx_i,y)} < 0$, we have $\Phi \Big(\frac{h_S(\bx_i,y_i) - \sum\limits_{y \neq y_i}{h_S(\bx_i,y)}}{\gamma} \Big) \le 1 \le 1-\frac{h(\bx_i,y_i) - \sum\limits_{y \neq y_i}{h(\bx_i,y)}}{\gamma}$. Therefore, (\ref{eq:similarity-error-seg1}) always holds.

By the definition of ${\hat R_{n}(f_S)}$, (\ref{eq:similarity-error-seg1-pre}), and (\ref{eq:similarity-error-seg1}), (\ref{eq:similarity-error-similarity}) is obtained.

\end{proof}
\begin{remark}
It can be verified that the image of the similarity function $S$ in Lemma~\ref{lemma::decompose-similarity} and Theorem~\ref{theorem::similarity-error} can be generalized from $[0,1]$ to $[0,a]$ for any $a \in \RR, a > 0$ with the condition $\gamma \ge 1$ replaced by $\gamma \ge a$. This is because $L_S$ is a compact operator for continuous similarity function $S \colon \cX \times \cX \to [0,a]$, and $\Big|h_S(\bx_i,y_i) - \sum\limits_{y \neq y_i}{h_S(\bx_i,y)}\Big| \le \sum\limits_{y=1}^c h_S(\bx_i,y) \le a$. Furthermore, given a symmetric and continuous function $S \colon \cX \times \cX \to [c,d], c,d \in \RR, c < d$, we can obtain a symmetric and continuous function $S' \colon \cX \times \cX \to [0,1]$ by setting $S' = \frac{S-a}{b-a}$, and then apply all the theoretical results of this paper to CDS with $S'$ being the similarity function for the similarity-based classifier.
\end{remark}

\begin{proof}[\textup{\bf Proof of Lemma~\ref{lemma::RC-bound}}]
Inspired by~\citet{koltchinskii2002}, we first prove that the Rademacher complexity of the function class formed by the maximum of several hypotheses is bounded by two times the sum of the Rademacher complexity of the function classes that these hypothesis belong to. That is,
\bal\label{eq:max-RC-bound}
\cfrakR(\cH_{\max}) \le 2\sum\limits_{y=1}^k \cfrakR(\cH_{S,y}),
\eal%
where $\cH_{\max} = \{\max\{h_1,\ldots,h_k\} \colon h_y \in \cH_{S,y},  1 \le y \le k \}$ for $1 \le k \le c-1$.

If no confusion arises, the notations $(\{\sigma_i\},\{\bx_i,y_i\})$ are omitted in the subscript of the expectation operator in the following text, i.e., ${\E}_{\{\sigma_i\},\{\bx_i,y_i\}}$ is abbreviated to ${\E}$.
According to Theorem $11$ of~\citet{koltchinskii2002}, it can be verified that
\begin{align*}
&{\E}_{\{\sigma_i\},\{\bx_i,y_i\}}\bth{\pth{\sup_{h \in \cH_{\max}} {\bigg|\frac{1}{n} \sum\limits_{i=1}^n {\sigma_i}{h_S(\bx_i)} \bigg| }}^{+} } \nonumber \\
&\le \sum\limits_{y=1}^k {\E}_{\{\sigma_i\},\{\bx_i,y_i\}}\bth{\pth{\sup_{h \in \cH_{S,y}} {\bigg|\frac{1}{n} \sum\limits_{i=1}^n {\sigma_i}{h_S(\bx_i)} \bigg| }}^{+} }.
\end{align*}%
Therefore,
{\allowdisplaybreaks
\bal\label{eq:kernel-error-seg1}
\cfrakR(\cH_{\max}) &= {\E}_{\{\sigma_i\},\{\bx_i,y_i\}}\bth{\sup_{h \in \cH_{\max}} {\bigg|\frac{1}{n} \sum\limits_{i=1}^n {\sigma_i}{h_S(\bx_i)} \bigg| } } \nonumber \\
& \le {\E}_{\{\sigma_i\},\{\bx_i,y_i\}} \bth{\big(\sup_{h \in \cH_{\max}} {\frac{1}{n} \sum\limits_{i=1}^n {\sigma_i}{h_S(\bx_i)} }\big)^{+} } \nonumber \\
&\phantom{\le}{+}{\E}_{\{\sigma_i\},\{\bx_i,y_i\}}\bth{\big(\sup_{h \in \cH_{\max}} {-\frac{1}{n} \sum\limits_{i=1}^n {\sigma_i}{h_S(\bx_i)} }\big)^{+} } \nonumber \\
&=2 {\E}_{\{\sigma_i\},\{\bx_i,y_i\}}\bth{\big(\sup_{h \in \cH_{\max}} {\frac{1}{n} \sum\limits_{i=1}^n {\sigma_i}{h_S(\bx_i)} }\big)^{+} } \nonumber \\
&\le 2 \sum\limits_{y=1}^k {\E}_{\{\sigma_i\},\{\bx_i,y_i\}}\bth{\big(\sup_{h \in \cH_{S,y}} {\frac{1}{n} \sum\limits_{i=1}^n {\sigma_i}{h_S(\bx_i)} }\big)^{+} } \nonumber \\
&\le 2 \sum\limits_{y=1}^k {\E}_{\{\sigma_i\},\{\bx_i,y_i\}}\bth{\sup_{h \in \cH_{S,y}} {\left|\frac{1}{n} \sum\limits_{i=1}^n {\sigma_i}{h_S(\bx_i)} \right| } } = 2 \sum\limits_{y=1}^k \cfrakR(\cH_{S,y}).
\eal%
}

The equality in the third line of (\ref{eq:kernel-error-seg1}) is due to the fact that $-\sigma_i$ has the same distribution as $\sigma_i$. Using this fact again, (\ref{eq:max-RC-bound}), we have
{\allowdisplaybreaks
\bal\label{eq:kernel-error-seg2}
\cfrakR(\cH_S) &= {\E}_{\{\sigma_i\},\{\bx_i,y_i\}}\bth{\sup_{m_{h_S} \in \cH_S} {\bigg|\frac{1}{n} \sum\limits_{i=1}^n {\sigma_i}{m_{h_S}(\bx_i,y_i)} \bigg| } } \nonumber \\
&= {\E}_{\{\sigma_i\},\{\bx_i,y_i\}}\bth{\sup_{m_{h_S} \in \cH_S} {\bigg|\frac{1}{n} \sum\limits_{i=1}^n {\sigma_i} \sum\limits_{y=1}^c  {m_{h_S}(\bx_i,y)}{{\1}_{y=y_i}}\bigg| } } \nonumber \\
& \le \sum\limits_{y=1}^c {\E}_{\{\sigma_i\},\{\bx_i,y_i\}}\bth{\sup_{m_{h_S} \in \cH_S} {\bigg|\frac{1}{n} \sum\limits_{i=1}^n {\sigma_i} {m_{h_S}(\bx_i,y)}{{\1}_{y=y_i}} \bigg| } } \nonumber \\
&\le \frac{1}{2n} \sum\limits_{y=1}^c {\E}_{\{\sigma_i\},\{\bx_i,y_i\}}\bth{\sup_{m_{h_S} \in \cH_S} {\bigg| \sum\limits_{i=1}^n {\sigma_i} {m_{h_S}(\bx_i,y)}(2{\1}_{y=y_i}-1) \bigg| } } \nonumber \\
&\phantom{\le}{+} \frac{1}{2n} \sum\limits_{y=1}^c {\E}_{\{\sigma_i\},\{\bx_i\}}\bth{\sup_{m_{h_S} \in \cH_S} {\bigg| \sum\limits_{i=1}^n {\sigma_i} {m_{h_S}(\bx_i,y)}\bigg| } } \nonumber \\
&= \frac{1}{n} \sum\limits_{y=1}^c {\E_{\{\sigma_i\},\{\bx_i\}}} \bth{\sup_{m_{h_S} \in \cH_S} {\bigg| \sum\limits_{i=1}^n {\sigma_i} {m_{h_S}(\bx_i,y)}\bigg| } }.
\eal%
}
Also, for any given $1 \le y \le c$,
{\allowdisplaybreaks
\bal\label{eq:kernel-error-seg3}
&\frac{1}{n} {\E_{\{\sigma_i\},\{\bx_i\}}}\bth{\sup_{m_{h_S} \in \cH_S} {\bigg| \sum\limits_{i=1}^n {\sigma_i} {m_{h_S}(\bx_i,y)} \bigg| } } \nonumber \\
&= \frac{1}{n} {\E_{\{\sigma_i\},\{\bx_i\}}}\bth{\sup_{h_S(\cdot,y) \in \cH_{S,y}, y=1 \ldots c}
\bigg| \sum\limits_{i=1}^n {\sigma_i} {h_S(\bx_i,y) - {\sigma_i} \argmax_{y' \neq y} h_S(\bx_i,y') } \bigg| } \nonumber \\
&\le \frac{1}{n} {\E_{\{\sigma_i\},\{\bx_i\}}}\bth{\sup_{h_S(\cdot,y) \in \cH_{S,y}} {\bigg| \sum\limits_{i=1}^n {\sigma_i} {h_S(\bx_i,y)} \bigg| } } \nonumber \\
&\phantom{\le}{+} \frac{1}{n} {\E_{\{\sigma_i\},\{\bx_i\}}}\bth{\sup_{h_S(\cdot,y') \in \cH_{S,y}', y' \neq y } {\bigg| \sum\limits_{i=1}^n {\sigma_i} {\argmax_{y' \neq y} h_S(\bx_i,y')} \bigg| } } \nonumber \\
&\le \frac{1}{n} {\E_{\{\sigma_i\},\{\bx_i\}}}\bth{\sup_{h_S(\cdot,y) \in \cH_{S,y}} {\bigg| \sum\limits_{i=1}^n {\sigma_i} {h_S(\bx_i,y)} \bigg| } } +
\frac{2}{n} \sum\limits_{y' \neq y} {\E_{\{\sigma_i\},\{\bx_i\}}}\bth{\sup_{h_S(\cdot,y') \in \cH_{S,y}'} {\bigg| \sum\limits_{i=1}^n {\sigma_i} {h_S(\bx_i,y')} \bigg| } }.
\eal%
}

Combining (\ref{eq:kernel-error-seg2}) and (\ref{eq:kernel-error-seg3}),
{\allowdisplaybreaks
\bal\label{eq:kernel-error-seg4}
\cfrakR(\cH_S) &\le \sum\limits_{y=1}^c \frac{1}{n} {\E_{\{\sigma_i\},\{\bx_i\}}}\bth{\sup_{h_S(\cdot,y) \in \cH_{S,y}} {\bigg| \sum\limits_{i=1}^n {\sigma_i} {h_S(\bx_i,y)}\bigg| } } \nonumber \\
&+
\sum\limits_{y=1}^c  \sum\limits_{y=1}^c \frac{2}{n} \sum\limits_{y' \neq y} {\E_{\{\sigma_i\},\{\bx_i\}}}\bth{\sup_{h_S(\cdot,y') \in \cH_{S,y}'} {\bigg| \sum\limits_{i=1}^n {\sigma_i} {h_S(\bx_i,y')} \bigg| } }  \nonumber \\
& = (2c-1)\sum\limits_{y=1}^c {\E_{\{\sigma_i\},\{\bx_i\}}}\bth{\sup_{h_S(\cdot,y) \in \cH_{S,y}} {\bigg| \frac{1}{n}\sum\limits_{i=1}^n {\sigma_i} {h_S(\bx_i,y)} \bigg| } } \nonumber \\
&= (2c-1)\sum\limits_{y=1}^c \cfrakR(\cH_{S,y}).
\eal%
}

\end{proof}

\subsection{Proof of Theorem~\ref{theorem::ise}}\label{sec::proof-ISE}
\begin{proof}
According to definition of ISE,
\bal\label{eq:ise-theorem-seg1}
&{\rm ISE} (\hat r,r) = \int_{\RR^d} {(\hat r - r)^2} dx = \int_{\RR^d} {{\hat r(\bx,\balpha)}^2} dx - 2\int_{\RR^d} {\hat r(\bx,\balpha)} r(\bx) dx + \int_{\RR^d} {r(\bx)}^2 dx.
\eal%
For a given distribution, $\int_{\RR^d} {r(\bx)}^2 dx$ is a constant. By Gaussian convolution theorem,
\bal\label{eq:ise-theorem-seg2}
&\int_{\RR^d} {{\hat r(\bx,\balpha)}^2} dx = {\tau_1} \sum\limits_{y=1}^2 {\balpha^{(y)}}^{\top} (\bK_{{\sqrt 2}h}){\balpha^{(y)}} -
{\tau_1} \sum\limits_{1 \le i < j \le n} 2{\alpha_i}{\alpha_j} {\bK}_{{\sqrt 2}h} (\bx_i - \bx_j) {\1}_{y_i \neq y_j},
\eal%
where ${\tau_1} = \frac{1}{({2{\pi}})^{d/2}{{({\sqrt 2}h)^d}}}$. Moreover,
\bal\label{eq:ise-theorem-seg3}
&\int_{\RR^d} {\hat r(\bx,\balpha)} r(\bx) dx \nonumber \\
&=\int_{\RR^d} {\hat p(\bx,1)} p(\bx,1) dx + \int_{\RR^d} {\hat p(\bx,2)} p(\bx,2) dx - \int_{\RR^d} {\hat p(\bx,1)} p(\bx,2) dx - \int_{\RR^d} {\hat p(\bx,2)} p(\bx,1) dx.
\eal%
Note that
\begin{align*}
\frac{1}{\tau_0}\int_{\RR^d} {\hat p(\bx,1)} p(\bx,1) dx = \sum\limits_{j \colon y_j = 1}  \int_{\RR^d} {\alpha_j} K_{\tau} (\bx - \bx_j) p(\bx,1) dx,
\end{align*}%
we then use the empirical term $\frac{\sum\limits_{i \colon i \neq j} {\alpha_j} K_{\tau} (\bx_i - \bx_j) {\1}_{y_i = 1}}{n-1}$ to approximate the integral $\int_{\RR^d} {\alpha_j} K_{\tau} (\bx - \bx_j) p(\bx,1) dx$. Since ${\E}_{\{\bx_i,y_i\}_{i \neq j}} \bth{ \frac{\sum\limits_{i \colon i \neq j}{\alpha_j}K_{\tau} (\bx_i - \bx_j) {\1}_{y_i = 1}}{n-1} } = \int_{\RR^d} {\alpha_j} K_{\tau} (\bx - \bx_j) p(\bx,1) dx$, and bounded difference holds for $\frac{\sum\limits_{i \colon i \neq j}{\alpha_j} K_{\tau} (\bx_i - \bx_j) {\1}_{y_i = 1}}{n-1}$, therefore,
\begin{align*}
&\Pr \bth{ \bigg|\frac{\sum\limits_{i \colon i \neq j}{\alpha_j} K_{\tau} (\bx_i - \bx_j) {\1}_{y_i = 1}}{n-1} - \int_{\RR^d} {\alpha_j} K_{\tau} (\bx - \bx_j) p(\bx,1) dx \bigg| \ge {\alpha_j}\varepsilon } \le 2\exp\big(-2(n-1)\varepsilon^2\big).
\end{align*}%
It follows that with probability at least $1-2{n_1}\exp\big(-2(n-1)\varepsilon^2\big)$, where $n_i$ is the number of data points with label $i$,
\bal\label{eq:ise-theorem-seg4}
&\bigg| \frac{\sum\limits_{i,j \colon i \neq j, y_i=y_j=1}{\alpha_j} K_{\tau} (\bx_i - \bx_j) }{n-1} - \frac{1}{\tau_0}\int_{\RR^d} {\hat p(\bx,1)} p(\bx,1) dx \bigg| \le \sum\limits_{j \colon y_j = 1} \alpha_j \varepsilon.
\eal%
Similarly, with probability at least $1-2{n_2}\exp\big(-2(n-1)\varepsilon^2\big)$,
\bal\label{eq:ise-theorem-seg5}
&\bigg| \frac{\sum\limits_{i,j \colon i \neq j, y_i=y_j=2}{\alpha_j} K_{\tau} (\bx_i - \bx_j) }{n-1} - \frac{1}{\tau_0} \int_{\RR^d} {\hat p(\bx,2)} p(\bx,2) dx \bigg| \le \sum\limits_{j \colon y_j = 2} \alpha_j \varepsilon.
\eal%
It follows from (\ref{eq:ise-theorem-seg4}) and (\ref{eq:ise-theorem-seg5}) that with probability at least $1-2{n}\exp\big(-2(n-1)\varepsilon^2\big)$,
\bal\label{eq:ise-theorem-seg6}
&\bigg| \frac{\sum\limits_{i,j \colon i \neq j, y_i=y_j}{\alpha_j} K_{\tau} (\bx_i - \bx_j) }{n-1} - \frac{1}{\tau_0} \int_{\RR^d} \big({\hat p(\bx,1)} p(\bx,1) + {\hat p(\bx,2)} p(\bx,2) \big) dx \bigg| \le \varepsilon.
\eal%
In the same way, with probability at least $1-2{n}\exp\big(-2n\varepsilon^2\big)$,
\bal\label{eq:ise-theorem-seg7}
&\bigg| \frac{\sum\limits_{i,j \colon y_i \neq y_j}{\alpha_j} K_{\tau} (\bx_i - \bx_j) }{n} - \frac{1}{\tau_0} \int_{\RR^d} \big({\hat p(\bx,1)} p(\bx,2) + {\hat p(\bx,2)} p(\bx,1) \big) dx \bigg| \le \varepsilon.
\eal%
Based on (\ref{eq:ise-theorem-seg6}) and (\ref{eq:ise-theorem-seg7}), with probability at least $1-2{n_2}\exp\big(-2(n-1)\varepsilon^2\big)-2{n}\exp\big(-2n\varepsilon^2\big)$,
\bal\label{eq:ise-theorem-seg8}
{\rm ISE} (\hat r,r) &\le 2 {\tau_0}\frac{\sum\limits_{i,j \colon y_i \neq y_j}{\alpha_j} K_{\tau} (\bx_i - \bx_j) }{n} - 2{\tau_0} \frac{\sum\limits_{i,j \colon i \neq j, y_i=y_j}{\alpha_j} K_{\tau} (\bx_i - \bx_j) }{n-1} \nonumber \\
&\quad + {\tau_1}\sum\limits_{y=1}^2 {\balpha^{(y)}}^{\top} (\bK_{{\sqrt 2}h}){\balpha^{(y)}} -
{\tau_1} \sum\limits_{1 \le i < j \le n} 2{\alpha_i}{\alpha_j} K_{{\sqrt 2}h} (\bx_i - \bx_j) {\1}_{y_i \neq y_j} + 2{\tau_0}\varepsilon \nonumber \\
& \le 4 {\tau_0}\frac{\sum\limits_{1 \le i < j \le n}(\alpha_i+\alpha_j) K_{\tau} (\bx_i - \bx_j) {\1}_{y_i \neq y_j} }{n} - {\tau_0} \frac{\sum\limits_{i,j = 1}^ n (\alpha_i+\alpha_j) K_{\tau} (\bx_i - \bx_j) }{n} \nonumber \\
&+
{\tau_1}\sum\limits_{y=1}^2 {\balpha^{(y)}}^{\top} (\bK_{{\sqrt 2}h}){\balpha^{(y)}} -
{\tau_1} \sum\limits_{1 \le i < j \le n} 2{\alpha_i}{\alpha_j} K_{{\sqrt 2}h} (\bx_i - \bx_j) {\1}_{y_i \neq y_j} + 2{\tau_0}(\frac{1}{n-1}+\varepsilon).
\eal%
The conclusion of this theorem can be obtained from (\ref{eq:ise-theorem-seg8}).
\end{proof}

\end{document}